\newcommand{\RNum}[1]{\uppercase\expandafter{\romannumeral #1\relax}}
\newcommand{\bm}[1]{\boldsymbol{#1}}
\newcommand{\Black}[1]{\textcolor[rgb]{0.00,0.00,0.00}{#1}}
\newcommand{\revisexin}[1]{\Black{#1}}
\newcommand{\revisejy}[1]{\Black{#1}}
\theoremstyle{plain}
\newtheorem{theorem}{Theorem}[section]
\newtheorem{proposition}[theorem]{Proposition}
\newtheorem{lemma}[theorem]{Lemma}
\newtheorem{corollary}[theorem]{Corollary}
\theoremstyle{definition}
\newtheorem{definition}[theorem]{Definition}
\theoremstyle{remark}
\icmltitlerunning{Boosting Graph Structure Learning with Dummy Nodes}
\begin{document}

\twocolumn[
\icmltitle{Boosting Graph Structure Learning with Dummy Nodes}




\begin{icmlauthorlist}
\icmlauthor{Xin Liu}{ust}
\icmlauthor{Jiayang Cheng}{ust}
\icmlauthor{Yangqiu Song}{ust}
\icmlauthor{Xin Jiang}{hw}
\end{icmlauthorlist}

\icmlaffiliation{ust}{Department of Computer Science and Engineering, Hong Kong University of Science and Technology, Hong Kong SAR, China}
\icmlaffiliation{hw}{Huawei Noah's Ark Lab, Hong Kong SAR, China}

\icmlcorrespondingauthor{Xin Liu}{xliucr@cse.ust.hk}

\icmlkeywords{graph structure learning, graph kernel, graph neural network, graph isomorphism}

\vskip 0.3in
]



\printAffiliationsAndNotice{This work was done when Xin Liu was an intern at Huawei Noah’s Ark Lab.} 

\begin{abstract}
With the development of graph kernels and graph representation learning, many superior methods have been proposed to handle scalability and oversmoothing issues on graph structure learning.
However, most of those strategies are designed based on practical experience rather than theoretical analysis.
In this paper, we use a particular dummy node connecting to all existing vertices without affecting original vertex and edge properties.
We further prove that such the dummy node can help build 
an efficient monomorphic edge-to-vertex transform and an epimorphic inverse to recover the original graph back.
It also indicates that adding dummy nodes can preserve local and global structures for better graph representation learning.
We extend graph kernels and graph neural networks with dummy nodes and conduct experiments on graph classification and subgraph isomorphism matching tasks.
Empirical results demonstrate that taking graphs with dummy nodes as input significantly boosts graph structure learning, and using their edge-to-vertex graphs can also achieve similar results.
\revisexin{We also discuss the gain of expressive power from the dummy in neural networks.}
\end{abstract}

\section{Introduction}
\label{sec:introduction}
Graph structures have been widely used in modeling the interactions and connections in complex systems, such as biological networks, chemical molecules, and social networks.
In fact, these data contain rich information in their graph structure beyond vertex and edge attributes.
For example, atoms are held together by covalent bonds, and different molecular compounds (usually named isomers) with similar atoms may have distinct properties due to different structures.
Thus, there has been a surge of interest in graph similarity, graph comparison, and subgraph matching.
In recent years, numerous approaches have been proposed in machine learning and deep learning, among which algorithms based on graph kernels (GKs)~\cite{borgwardt2005protein,shervashidze2011weisfeiler,morris2020weisfeiler} and graph neural networks (GNNs)~\cite{kipf2017semi,vashishth2020composition} are most notable.
GKs tackle the graph comparison by exploring and capturing the semantics inherent in graph-structured data.
The main idea behind graph kernels is that graphs with similar properties are highly likely to have similar distributions of substructures.
\revisexin{However, most GKs focus on vertex-centric substructures while ignoring the edge similarities.}
Instead, inductive GNNs automatically extract higher-order information of graphs, sometimes leading to more powerful features compared to hand-crafted features used by GKs~\cite{xu2019how}.
Nevertheless, there are also disadvantages of GKs and GNNs.
The runtime complexity of GKs when considering
subgraphs of size up to $k \geq 2$ is usually $\Omega(k \cdot |\mathcal{V}|^{2})$ at least and $\mathcal{O}(k \cdot |\mathcal{V}|^{k + 1})$ at worst, where $|\mathcal{V}|$ is the number of vertices in the smaller graph for comparison~\cite{kriege2020asurvey}.
The objective of GNNs is highly non-convex, requiring careful hyper-parameter tuning to stabilize the training procedure and avoid oversmoothing.
Besides, message passing in GNNs still faces the limitation of the expressive power \revisexin{and the information vanish upon 0-outdegree vertices in deeper networks}.

To address the aforementioned problems, many strategies have been designed.
\citeauthor{morris2020weisfeiler}~(\citeyear{morris2020weisfeiler}) proposed the local variants of Weisfeiler-Lehman Subtree Kernels ($k$-WL) to vastly reduce the computation time without performance decline.
$k$-IGNs~\cite{maron2019on}, $k$-GNNs~\cite{morris2019weisfeiler}, and  EASN~\cite{bevilacqua2021equivariant} involve high-order tensors in representing high-order substructures with the expressive power as $k$-WL.
Some kernels quantify the similarity based on random walks to reduce the complexity~\cite{zhang2018retgk}.
\citeauthor{li2018deeper}~(\citeyear{li2018deeper}) and \citeauthor{rong2019the}~(\citeyear{rong2019the}) addressed the GNNs' oversmoothing by randomly removing edges from graphs to make GNNs robust to various structures.
\revisexin{On the other hand, adding reversed edges in heterogeneous graphs is popular in practice~\cite{vashishth2020composition,hu2020heterogeneous}.}
Some strategies are based on \revisexin{experimental experience} rather than theoretical analysis.

To derive a theoretically guaranteed (sub)graph structure modeling that is consistent in original GKs computation and can improve GNNs,
we use a particular dummy node and connect it with all existing vertices without affecting original vertex and edge properties.
We start from the edge-to-vertex transform to theoretically analyze the role of the dummy in  structure preserving.
It turns out that an efficient monomorphic transform $L_{\Phi}$ to convert edges to vertices and an epimorphic inverse $L_{\Phi}^{-1}$ to recover the original graph back make the edge-to-vertex transform lossless.
It is interesting to observe the transformed graph also contains another dummy node.
We extend vertex-centric GKs and GNNs with dummy nodes to boost graph structure learning\revisexin{, with a linear computation cost to the number of edges.}

Our main contributions are highlighted as follows:
\begin{enumerate}
    \item We prove that adding a special dummy node with links to current existing vertices can help build an efficient and lossless edge-to-vertex transform\revisexin{, which also indicates that the edge information can be well preserved during learning.}
    \item We utilize dummy nodes and edges to extend state-of-the-art machine learning and deep learning models to improve their abilities to capture (sub)graph structures.
    \item Extensive experiments are conducted on graph classification and subgraph isomorphism counting and matching, and empirical results reveal the success of learning with graphs with dummy nodes.
\end{enumerate}
Code is publicly released at \url{https://github.com/HKUST-KnowComp/DummyNode4GraphLearning}.
\section{Related Work}
Before the deep learning era, graph kernels (GKs) dominated supervised graph structure learning through mapping graphs to Hilbert space and computing Gram matrices.
Different kernel functions focus on specific structural properties of graphs.
Shortest-path Kernel~\cite{borgward2005shortest} decomposes graphs into shortest paths and compares graphs according to their shortest paths, such as path lengths and endpoint labels.
Instead, Graphlet Kernels~\cite{shervashidze2009efficient} compute the distribution of small subgraphs under the assumption that graphs with similar graphlet distributions are highly likely to be similar.
Another important kernel family considers subtrees.
One state-of-the-art kernel is Weisfeiler-Lehman Subtree Kernel ($k$-WL)~\cite{shervashidze2011weisfeiler}, and some higher-order variants~\cite{morris2017glocalized} and local variants~\cite{morris2020weisfeiler} further strengthen the expressive power.
However, graph kernels are limited by non-inductive learning and the super quadratic time complexity to the training data size.

With the rapid development of heterogeneous computing, neural networks have attracted attention recently.
Researchers have successfully used relational inductive biases within deep learning architectures to build graph neural networks (GNNs).
End-to-end learning relieves the burden of feature engineering and makes the structure learning sophisticated and flexible~\cite{gilmer2017neural,battaglia2018relational}.
Ideas behind kernels are still referable for the design of GNNs~\cite{morris2019weisfeiler}.
$k$-GNNs~\cite{morris2019weisfeiler} and EASN~\cite{bevilacqua2021equivariant} align the $k$-WL hierarchy to enhance the expressive power of GNNs.
However, this involves high-order tensor computations.
How to efficiently boost the graph structure learning is also one of the research frontiers.
Ranking neighbors based on attention scores enables explicit weights for information aggregation~\cite{yao2021node2seq}.
Besides, it also boosts learning to consider multihop neighbors~\cite{zhu2020beyond,teru2020inductive}.
On the other hand, dynamic high-order neighbor selection~\cite{yang2021graph} and dynamic pointer links~\cite{velickovic2020pointer} illustrate the power of data-driven manipulations.
Moreover, differentiable pooling yields consistent and significant performance improvement for end-to-end hierarchical graph representation learning~\cite{ying2018hierarchical,zhang2019hierarchical}.
\citeauthor{li2018deeper}~(\citeyear{li2018deeper}) and \citeauthor{rong2019the}~(\citeyear{rong2019the}) addressed the GNNs' oversmoothing by randomly removing edges from graphs to make GNNs robust to various structures.
However, most of these strategies are based on practical experience rather than theoretical analysis.

\revisexin{Some literature suggests utilizing ``dummy'' super-nodes to explicitly learn subgraphs~\cite{scarselli2009the,hamilton2017representation}, but the node is served as a special readout to conduct the representation of the target subgraph. On the contrary, we directly add a dummy node as a part of the target graph to learn representations and capture similarities.}
\section{Lossless Edge-to-vertex Transforms}

\subsection{Preliminary}
Let $\mathcal{G} = (\mathcal{V}_\mathcal{G}, \mathcal{E}_\mathcal{G}, \mathcal{X}_\mathcal{G}, \mathcal{Y}_\mathcal{G})$ be a \textit{directed connected heterogeneous multigraph} with a vertex set $\mathcal{V}_\mathcal{G}$, an edge set $\mathcal{E}_\mathcal{G} \subseteq \mathcal{V}_\mathcal{G} \times \mathcal{V}_\mathcal{G}$,
a label function $\mathcal{X}_\mathcal{G}$ that maps a vertex to a set of \textit{vertex labels}, and a label function $\mathcal{Y}_\mathcal{G}$ that maps an edge to a set of \textit{edge labels}.
Under this definition, multiple edges with the same source and the same target can be merged by extending $\mathcal{Y}$, making $\mathcal{G}$ without multiedges for clarity.
To simplify the statement, we also let $\mathcal{Y}_\mathcal{G}((u, v)) = \phi$ if $u,v \in \mathcal{V}_\mathcal{G}$ but $(u, v) \not \in \mathcal{E}_\mathcal{G}$.
We use $d_v^-$ and $d_v^+$ to denote the indegree and outdegree of vertex $v$.

\begin{figure*}[!ht]
    \centering
    \begin{subfigure}{.23\textwidth}
        \centering
        \includegraphics[width=4.0cm]{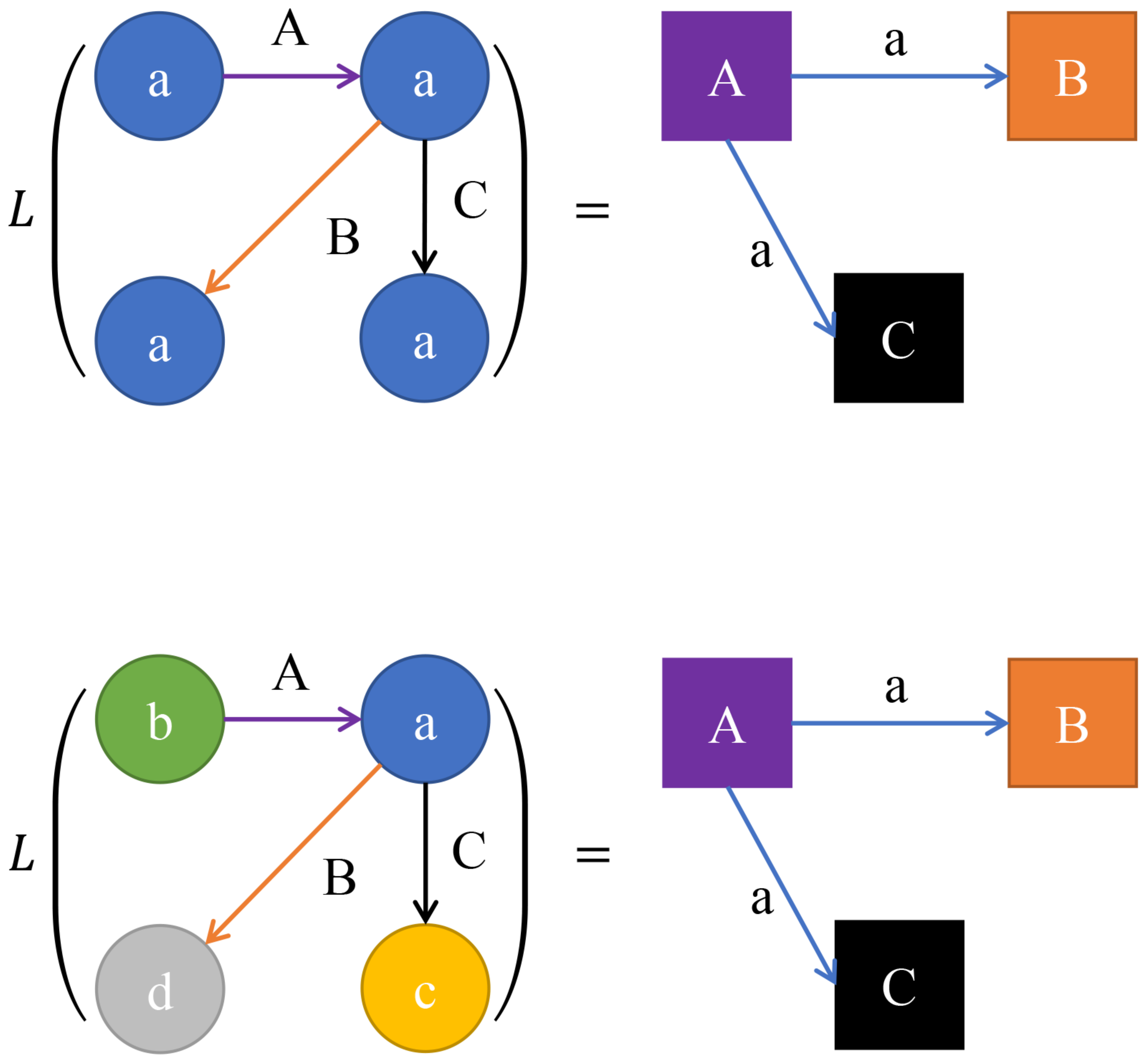}
        \caption{one vertex with 0 indegree, two vertices with 0 outdegree}
        \label{fig:claw_line2}
    \end{subfigure}
    \ \ 
    \begin{subfigure}{.23\textwidth}
        \centering
        \includegraphics[width=4.0cm]{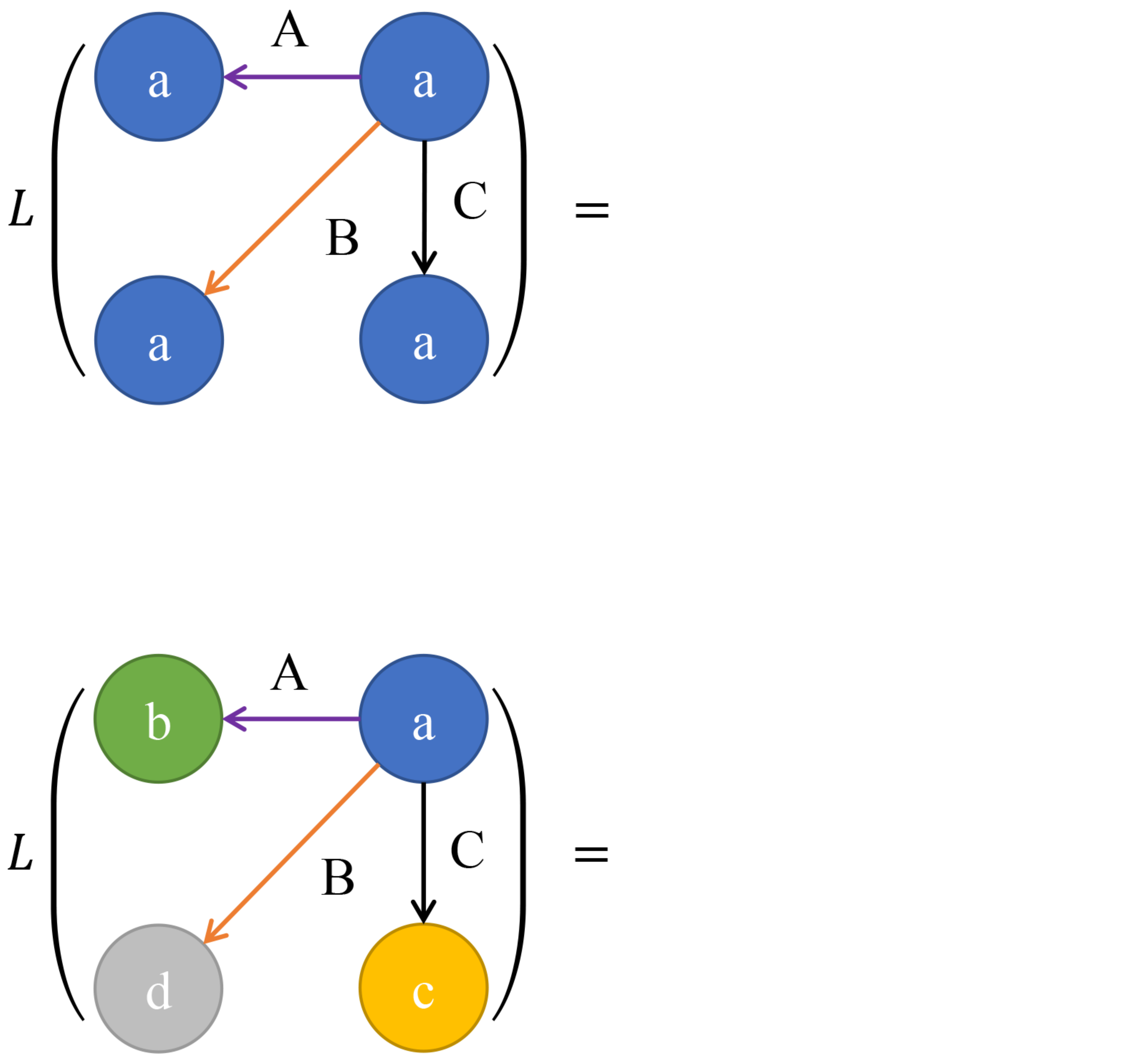}
        \caption{one vertex with 0 indegree, three vertices with 0 outdegree}
        \label{fig:claw_line5}
    \end{subfigure}
    \ \ 
    \begin{subfigure}{.23\textwidth}
        \centering
        \includegraphics[width=4.0cm]{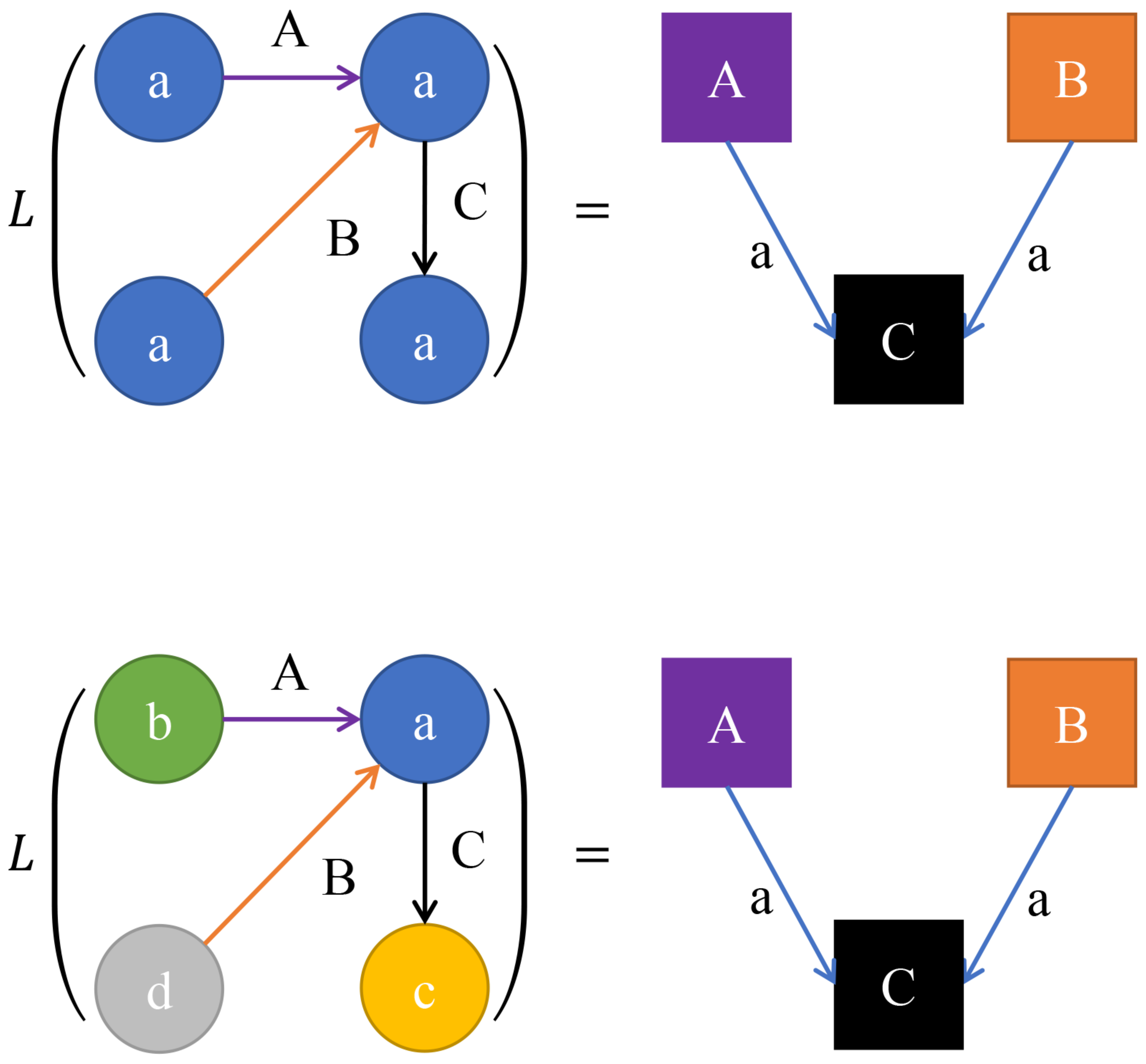}
        \caption{two vertices with 0 indegree, one vertex with 0 outdegree}
        \label{fig:claw_line3}
    \end{subfigure}
    \ \ 
    \begin{subfigure}{.24\textwidth}
        \centering
        \includegraphics[width=4.0cm]{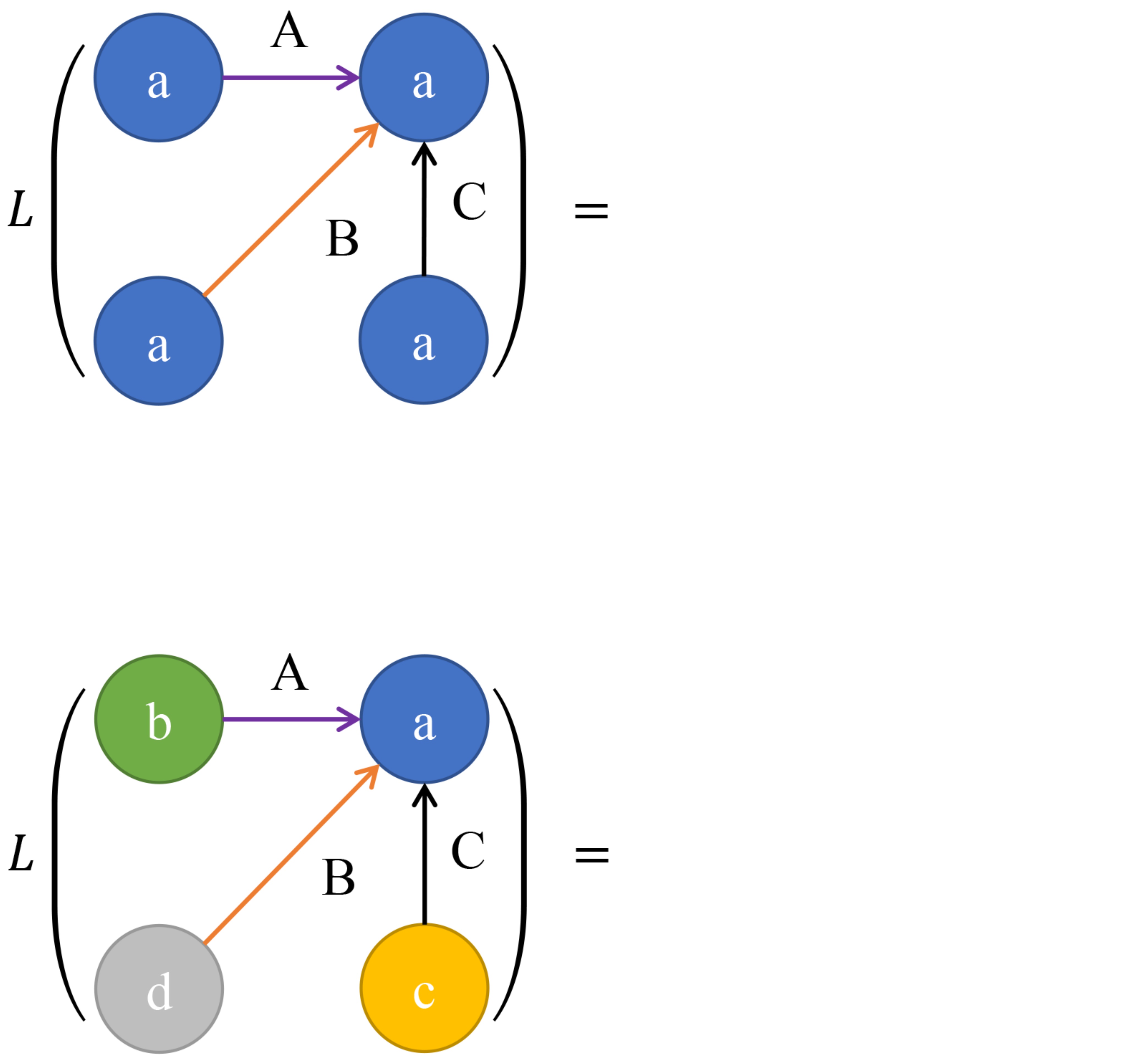}
        \caption{three vertices with 0 indegree, one vertex with 0 outdegree}
        \label{fig:claw_line4}
    \end{subfigure}
    \caption{Examples of directed 3-claws and line graphs, where lowercased letters correspond to vertex labels of the original graphs (and edge labels of the line graphs), capitalized letters correspond to edge labels of the original graphs (and vertex labels of the line graphs). Note that line graphs in (b) and (d) are empty.}
    \label{fig:claw_line}
\end{figure*}

In graph theory, the edge-to-vertex transform converts a graph to its line graph where original vertex properties are stored in edges, and original edge properties are stored in vertices in the transformed graph~\cite{harary1969graph}.
\begin{definition}[Edge-to-vertex transform]
\label{def:line}
A \textit{line graph} (also known as \textit{edge-to-vertex graph}) $\mathcal{H}$ of a graph $\mathcal{G}$ is obtained by a bijection $g: \mathcal{E}_{\mathcal{G}} \rightarrow \mathcal{V}_{\mathcal{H}}$, where
$g$ associates a vertex $v' \in \mathcal{V}_{\mathcal{H}}$ with each edge $e = g^{-1}(v') \in \mathcal{E}_{\mathcal{G}}$. And two vertices $u', v' \in \mathcal{V}_{\mathcal{H}}$ are connected as $(u', v')$ if and only if the destination of $d = g^{-1}(u')$ is the source of $e = g^{-1}(v')$.
Formally, we have: \\
\scalebox{0.88}{\parbox{1.13\linewidth}{
\begin{itemize}
    \item $\forall e=(u,v) \in \mathcal{E}_{\mathcal{G}}, 
    \mathcal{Y}_{\mathcal{G}}(e) = \mathcal{X}_{\mathcal{H}}(g(e))$,
    \item $\forall v' \in \mathcal{V}_{\mathcal{H}},
    \mathcal{X}_{\mathcal{H}}(v') = \mathcal{Y}_{\mathcal{G}}(g^{-1}(v'))$,
    \item $\forall d, e \in \mathcal{E}_{\mathcal{G}}, u' = g(d) \in \mathcal{V}_{\mathcal{H}}, v' = g(e) \in \mathcal{V}_{\mathcal{H}}, \\(d.\text{target}=e.\text{source}=v) \rightarrow (\mathcal{Y}_{\mathcal{H}}((u', v')) = \mathcal{X}_{\mathcal{G}}(v)))$,
    \item $\forall e'=(u', v') \in \mathcal{E}_{\mathcal{H}}, d=g^{-1}(u') \in \mathcal{V}_{\mathcal{G}}, e=g^{-1}(v') \in \mathcal{V}_{\mathcal{G}},\\ (d.\text{target} = e.\text{source}) \wedge (\mathcal{Y}_{\mathcal{H}}(e') = \mathcal{X}_{\mathcal{H}}(d.\text{target}))$.
\end{itemize}
}}
\end{definition}
We write $\mathcal{H}$ as $L(\mathcal{G})$ where $L: \mathcal{G} \rightarrow \mathcal{H}$ refers to the \textit{edge-to-vertex transform}.
Based on the definition, the number of vertices of $\mathcal{H}$ is the same as the number of edges of $\mathcal{G}$, and the number of edges of $\mathcal{H}$ equals to $\sum_{v \in \mathcal{V}_{\mathcal{G}}} d_v^- \cdot d_v^+$.
But there are two worst cases: empty or complete line graphs. 

\subsection{Non-injective Edge-to-vertex Transforms}

As the vertices of the line graph $\mathcal{H}$ corresponds to the edges of the original graph $\mathcal{G}$, some properties of $\mathcal{G}$ that depend only on adjacency between edges may be preserved as equivalent properties in $\mathcal{H}$ that depend on adjacency between vertices.
Intuitively, we have one question of whether we can transform the line graph back to its original graph by a function $L^{-1}: \mathcal{H} \rightarrow \mathcal{G}$.
The answer is no, because some information may be lost.
The reason behind is that some vertices in the original graph are located in claw structures.
One classic case is a 3-claw structure and a triangle having the same line graph in undirected scenarios.
In fact, directed cases are more general: an extreme case is that all directed 2-path structures (i.e., two vertices are connected by one edge) with the same edge label have the same line graph.
Two claws with similar structures but different labels may also result in the same line graph.
Figure~\ref{fig:claw_line} demonstrates the 3-claw examples, and more complicated structures can be enumerated by extending these claws.
All these instances have one thing in common: 
the information from those vertices with 0 indegree or 0 outdegree gets lost during the edge-to-vertex transform.
It is easy to get the Lemma~\ref{lemma:information_preserving} from Definition~\ref{def:line}.

\begin{lemma}
\label{lemma:information_preserving}
During the edge-to-vertex transform over a directed graph $\mathcal{G}$, the information of a vertex $v$ is preserved if and only if its indegree $d_v^-$ is nonzero and its outdegree $d_v^+$ is nonzero. In particular, there are $d_v^- \cdot d_v^+$ copies in the line graph $\mathcal{H} = L(\mathcal{G})$.
\end{lemma}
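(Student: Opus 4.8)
The plan is to read the statement off directly from \cref{def:line}, exploiting the fact that the only way a vertex label $\mathcal{X}_{\mathcal{G}}(v)$ of the original graph survives the transform is as an \emph{edge} label of the line graph. Concretely, the third and fourth bullets of \cref{def:line} say that an edge $(u', v') \in \mathcal{E}_{\mathcal{H}}$ carries label $\mathcal{Y}_{\mathcal{H}}((u',v')) = \mathcal{X}_{\mathcal{G}}(v)$ exactly when $d = g^{-1}(u')$ has target $v$ and $e = g^{-1}(v')$ has source $v$. Thus ``the information of $v$ is preserved'' means precisely that the set $\mathcal{X}_{\mathcal{G}}(v)$ appears as the label of at least one edge of $\mathcal{H}$, and the whole lemma reduces to counting how many edges of $\mathcal{H}$ carry that label.

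The key step is to exhibit a bijection between the edges of $\mathcal{H}$ labeled $\mathcal{X}_{\mathcal{G}}(v)$ and the ordered pairs $(d, e)$ consisting of an incoming edge $d$ at $v$ (i.e. $d.\text{target} = v$) and an outgoing edge $e$ at $v$ (i.e. $e.\text{source} = v$). Given such a pair, $g$ produces vertices $g(d), g(e) \in \mathcal{V}_{\mathcal{H}}$, and the matching condition $d.\text{target} = e.\text{source} = v$ guarantees that $(g(d), g(e))$ is an edge of $\mathcal{H}$ labeled with $\mathcal{X}_{\mathcal{G}}(v)$. Conversely, every edge of $\mathcal{H}$ bearing this label arises in this way from a unique pair, since $g$ is a bijection and, because multiedges have been merged, the common endpoint $v = d.\text{target} = e.\text{source}$ is determined by the pair. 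Counting the pairs then gives exactly $d_v^- \cdot d_v^+$ choices, the incoming and outgoing selections being independent; self-loops, if present, simply correspond to pairs with $d = e$ and require no special treatment.

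Finally I would draw the equivalence: the number of copies of $v$'s information in $\mathcal{H}$ equals $d_v^- \cdot d_v^+$, which is positive exactly when both $d_v^- \neq 0$ and $d_v^+ \neq 0$; hence the information of $v$ is preserved if and only if its indegree and outdegree are both nonzero, and in that case there are precisely $d_v^- \cdot d_v^+$ copies. As a consistency check, each edge of $\mathcal{H}$ has a single middle vertex $v = d.\text{target} = e.\text{source}$, so the per-vertex counts partition $\mathcal{E}_{\mathcal{H}}$ and sum to $\sum_{v} d_v^- \cdot d_v^+$, matching the edge count of $\mathcal{H}$ recorded right after \cref{def:line}.

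I do not expect a genuine obstacle, as the claim is an immediate corollary of the definition; the only points needing care are (i) arguing that a vertex label of $\mathcal{G}$ can surface \emph{only} as an edge label of $\mathcal{H}$, so that a vertex with zero indegree or zero outdegree truly vanishes, and (ii) making the count exact, namely that distinct pairs $(d,e)$ yield distinct edges and that no edge is attributed to two different middle vertices. Both of these follow from the bijectivity of $g$ together with the merging of multiedges.
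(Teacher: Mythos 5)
Your proposal is correct and takes essentially the same route as the paper: the paper offers no explicit proof, asserting only that the lemma "is easy to get" from Definition~\ref{def:line}, and your argument—matching each edge of $\mathcal{H}$ labeled $\mathcal{X}_{\mathcal{G}}(v)$ bijectively with an ordered pair of one in-edge and one out-edge at $v$, then counting $d_v^- \cdot d_v^+$ such pairs—is precisely the intended reading of that definition. Your consistency check that the per-vertex counts sum to $\sum_{v \in \mathcal{V}_{\mathcal{G}}} d_v^- \cdot d_v^+ = |\mathcal{E}_{\mathcal{H}}|$ also agrees with the edge count the paper records immediately after the definition.
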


\subsection{Injective and Inversive Edge-to-vertex Transforms}
\label{sec:lossless_transform}

\citeauthor{liu2021graph}~(\citeyear{liu2021graph}) addressed the lost of information by introducing reversed edges with specific edge labels.
With the help of reversed edges, all (non-isolated) vertices have nonzero indegrees and nonzero outdegrees.
However, this strategy makes the graph and corresponding line graph extremely dense.
Assume a graph $\mathcal{G}$ and its line graph $\mathcal{H} = L(\mathcal{G})$, then the modified graph doubles the number of edges, and the corresponding line graph has $2|\mathcal{V}_{\mathcal{H}}|$ vertices and $\sum_{v \in \mathcal{V}_{\mathcal{G}}} (d_v^- + d_v^+)^2 \geq 4 |\mathcal{E}_{\mathcal{H}}|$ edges.
As a result, the line graph cannot be used directly in practice.
To say the least, it doubles the computation of graph convolutions and quadruples that of line graph convolutions.
Therefore, we propose our efficient solution to eliminate the 0-indegree vertices and 0-outdegree vertices by adding dummy edges starting from and sinking to one particular dummy node.


\begin{corollary}
\label{corollary:information_preserving}
Given a directed graph $\mathcal{G}$ with $n$ vertices, the modified graph $\mathcal{G}_{\varphi}$ involves one dummy node $\varphi$ and $2n$ dummy edges where 
this special dummy node connects every $v \in \mathcal{V}_{\mathcal{G}}$ by two dummy edges $(\varphi, v)$ and $(v, \varphi)$.
During the edge-to-vertex transform $L$ over $\mathcal{G}_{\varphi}$,
the information of each vertex $v \in \mathcal{G}$ is preserved. In particular,
there are $(d_v^- + 1) \cdot (d_v^+ + 1)$ copies in the line graph $\mathcal{H}_{\varphi} = L(\mathcal{G}_{\varphi})$.
\end{corollary}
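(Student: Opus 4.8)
The plan is to reduce the corollary directly to Lemma~\ref{lemma:information_preserving} via a degree-counting argument on the modified graph $\mathcal{G}_{\varphi}$. First I would fix an arbitrary original vertex $v \in \mathcal{V}_{\mathcal{G}}$ and track how its indegree and outdegree change under the dummy construction. Since $\varphi \notin \mathcal{V}_{\mathcal{G}}$ is a genuinely new vertex, the edges $(\varphi, v)$ and $(v, \varphi)$ cannot coincide with any pre-existing edge of $\mathcal{G}$, so no multiedge merging occurs and each of these two dummy edges is added cleanly. The edge $(\varphi, v)$ contributes exactly one new incoming edge to $v$, and the edge $(v, \varphi)$ contributes exactly one new outgoing edge; hence in $\mathcal{G}_{\varphi}$ the indegree of $v$ becomes $d_v^- + 1$ and its outdegree becomes $d_v^+ + 1$.

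Next I would observe that both updated quantities are strictly positive: $d_v^- + 1 \geq 1$ and $d_v^+ + 1 \geq 1$ hold regardless of whether $v$ was a pure source, a pure sink, or isolated in the original graph. This is precisely the hypothesis required by the sufficiency direction of Lemma~\ref{lemma:information_preserving}, applied now to $\mathcal{G}_{\varphi}$ rather than $\mathcal{G}$. The lemma therefore guarantees that the information of $v$ is preserved under the edge-to-vertex transform $L$ over $\mathcal{G}_{\varphi}$.

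For the multiplicity claim I would invoke the quantitative part of the same lemma, which states that a vertex with nonzero in- and outdegree yields exactly $(\text{indegree}) \cdot (\text{outdegree})$ copies of its label in the line graph. Substituting the degrees computed in $\mathcal{G}_{\varphi}$ gives $(d_v^- + 1) \cdot (d_v^+ + 1)$ copies in $\mathcal{H}_{\varphi} = L(\mathcal{G}_{\varphi})$, as asserted. Because $v$ was arbitrary, the conclusion holds for every original vertex of $\mathcal{G}$.

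As for difficulty, there is no deep obstacle: the corollary is essentially a specialization of Lemma~\ref{lemma:information_preserving} once the degrees are correctly updated. The only point demanding care is the bookkeeping, namely confirming that attaching $\varphi$ with precisely one incoming and one outgoing dummy edge per vertex raises each degree by \emph{exactly} one rather than zero (through accidental merging) or two (through double counting). This is why I would explicitly record that $\varphi$ is a fresh vertex, so that all $2n$ dummy edges are new and pairwise distinct, and the per-vertex increment of $(+1, +1)$ is unambiguous.
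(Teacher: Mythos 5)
Your proposal is correct and follows exactly the route the paper intends: the corollary is stated without a separate proof precisely because it is the immediate specialization of Lemma~\ref{lemma:information_preserving} to $\mathcal{G}_{\varphi}$, where each original vertex's indegree and outdegree are both raised by exactly one and hence are nonzero. Your added bookkeeping (that $\varphi$ is fresh, so no merging or double counting can occur) is a sound and welcome precaution, but it does not change the substance of the argument.
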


According to Definition~\ref{def:line}, we have the statistics of the line graph $\mathcal{H}_{\varphi}$: \\
\scalebox{0.88}{\parbox{1.13\linewidth}{
\begin{align}
    |\mathcal{V}_{\mathcal{H}_{\varphi}}| &= m + 2n = |\mathcal{V}_{\mathcal{H}}| + 2n,  \label{eq:v_dummy} \\
    |\mathcal{E}_{\mathcal{H}_{\varphi}}| &= \sum_{v \in \mathcal{V}_{\mathcal{G}'}} d_v^- \cdot d_v^+ \nonumber \\
    &= n^2 + \sum_{v \in \mathcal{V}_{\mathcal{G}}} (d_v^- + 1) \cdot (d_v^+ + 1) \nonumber \\
    &= \textcolor[rgb]{0.997,0.439,0.000}{\sum_{v \in \mathcal{V}_{\mathcal{G}}} d_v^- \cdot d_v^+} + \textcolor[rgb]{0.412,0.412,0.412}{n^2} + \textcolor[rgb]{0.176,0.616,0.997}{n} + \textcolor[rgb]{0.900,0.900,0.003}{m} + \textcolor[rgb]{0.408,0.827,0.129}{m} \nonumber \\
    &= \textcolor[rgb]{0.997,0.439,0.000}{|\mathcal{E}_{\mathcal{H}}|} + \textcolor[rgb]{0.412,0.412,0.412}{n^2} + \textcolor[rgb]{0.176,0.616,0.997}{n} + \textcolor[rgb]{0.900,0.900,0.003}{m} + \textcolor[rgb]{0.408,0.827,0.129}{m}. \label{eq:e_dummy}
\end{align}
}}

\begin{figure}[!t]
    \centering
    \begin{subfigure}{0.48\textwidth}
        \centering
        \includegraphics[width=1.0\textwidth]{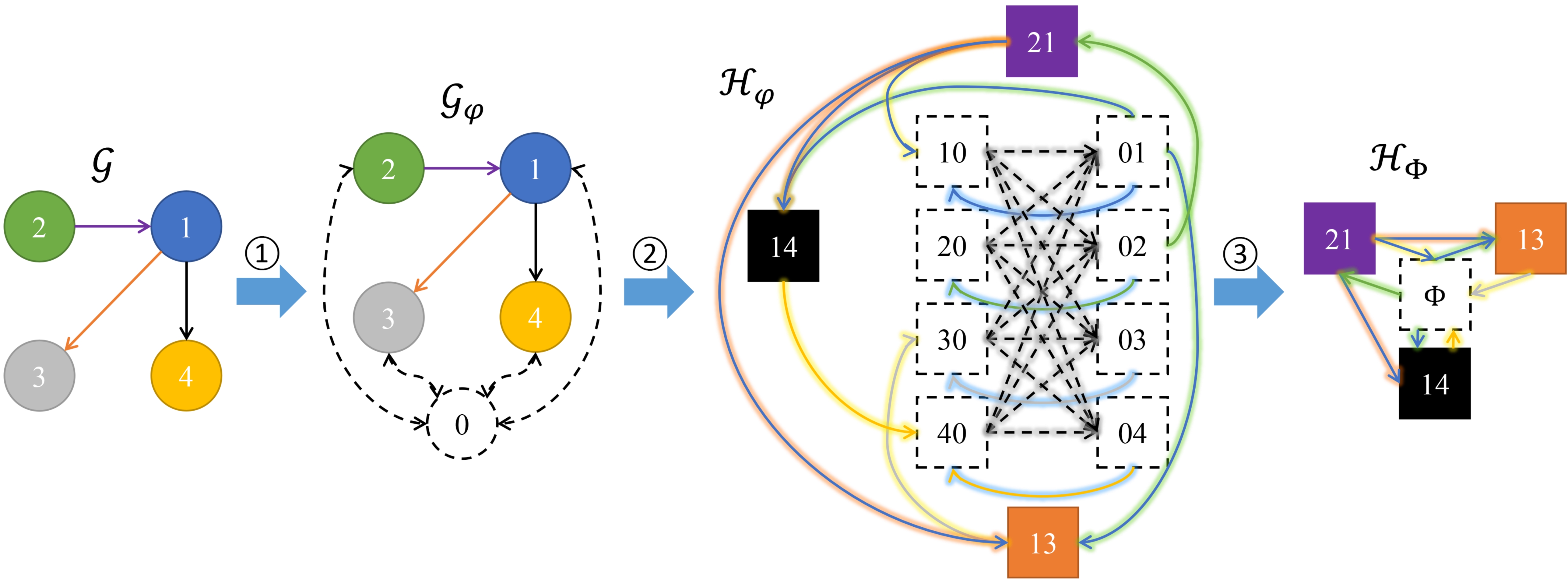}
        \caption{edge-to-vertex transform $L_{\Phi}$}
        \label{fig:evt}
    \end{subfigure}

    \begin{subfigure}{0.48\textwidth}
        \centering
        \includegraphics[width=1.0\textwidth]{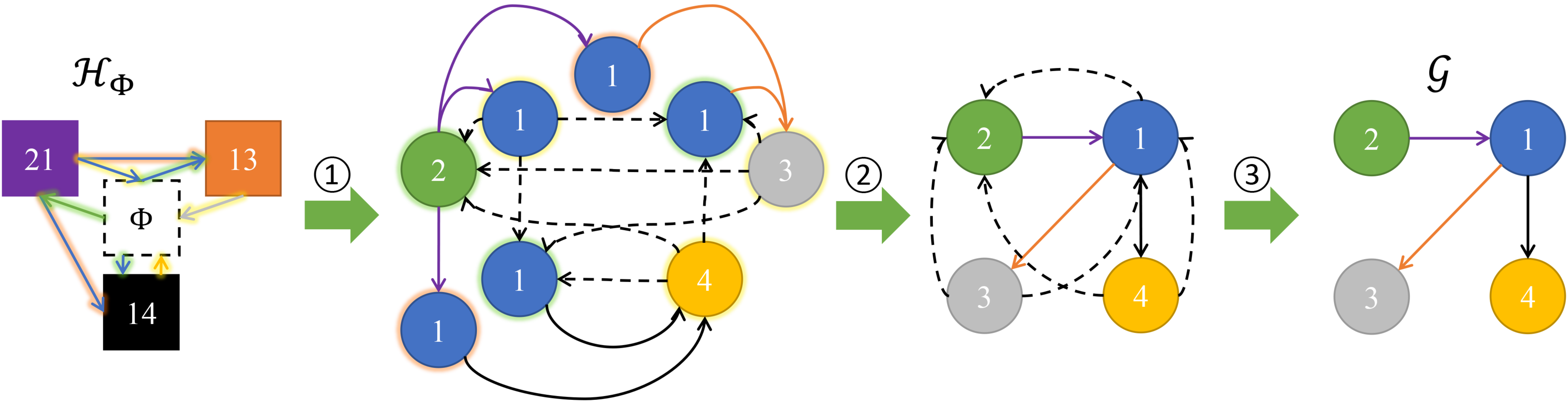}
        \caption{inverse edge-to-vertex transform $L_{\Phi}^{-1}$}
        \label{fig:ievt}
    \end{subfigure}
    \caption{Our proposed edge-to-vertex transforms $L_{\Phi}$ over 3-claws and the inverse transform $L_{\Phi}^{-1}$, where numbers located in circles indicates the original vertex ids, numbers located in squares indicates the pairs of original source id and original target id.}
    \label{fig:transform}
\end{figure}

As Corollary~\ref{corollary:information_preserving}, all original vertices are stored in the line graph.
But the costs are not negligible.
The number of vertices in the line graph obviously increases by $2n$.
But the worst thing is
dramatical growth of the number of edges in an additional complexity $\mathcal{O}(n^2)$.
To investigate further, we transform a directed 3-claw for demonstration.
As shown in Figure~\ref{fig:evt}, the step \textcircled{\raisebox{-.9pt} {2}} is the conventional edge-to-vertex transform over $\mathcal{G}_{\varphi}$.
The edges in its line graph $\mathcal{H}_{\varphi}$ can be divided into five categories:
\begin{itemize}[noitemsep]
    \item \textcolor[rgb]{0.997,0.439,0.000}{$|\mathcal{E}_{\mathcal{H}}|$} edges are connections through the original $n$ vertices, which can be preserved by the transform without the help of dummy node;
    \item \textcolor[rgb]{0.412,0.412,0.412}{$n^2$} edges are connections between dummy edges through the dummy node;
    \item \textcolor[rgb]{0.176,0.616,0.997}{$n$} edges are connections between dummy edges through the original $n$ vertices;
    \item \textcolor[rgb]{0.900,0.900,0.003}{$m$} edges start from the original edges and sink to the dummy edges;
    \item \textcolor[rgb]{0.408,0.827,0.129}{$m$} edges start from the dummy and sink to the original.
\end{itemize}
In fact, the \textcolor[rgb]{0.412,0.412,0.412}{$n^2$} edges plotted as \textcolor[rgb]{0.412,0.412,0.412}{dashed grey lines} are useless in original structure preserving since these edges does not store any graph-specific properties.
On the contrary, each of the \textcolor[rgb]{0.176,0.616,0.997}{$n$} edges marked in \textcolor[rgb]{0.176,0.616,0.997}{blue} maintains original vertex properties.
However, Corollary~\ref{corollary:information_preserving} confirms that there are $(d_v^- + 1) \cdot (d_v^+ + 1) = (d_v^- \cdot d_v^+ + d_v^- + d_v^+) + 1$ copies for an original vertex $v \in \mathcal{V}_{\mathcal{G}}$.
Since $d_v^- \cdot d_v^+ + d_v^- + d_v^+ > 0$ holds for connected components except an isolated point, it is also safe to remove these \textcolor[rgb]{0.176,0.616,0.997}{$n$} edges when $|\mathcal{E}_{\mathcal{G}}| = m > 0$.

After deleting the \textcolor[rgb]{0.412,0.412,0.412}{$n^2$} $+$ \textcolor[rgb]{0.176,0.616,0.997}{$n$} edges, we find there is no connections between dummy edges anymore.
Thus, we merge the corresponding $2n$ vertices as one dummy $\Phi$ and finally get the new transformed graph $\mathcal{H}_{\Phi}$ with \\
\scalebox{0.88}{\parbox{1.13\linewidth}{
\begin{align}
    |\mathcal{V}_{\mathcal{H}_{\Phi}}| &= m + 1 = |\mathcal{V}_{\mathcal{H}}| + 1, \label{eq:h_v} \\
    |\mathcal{E}_{\mathcal{H}_{\Phi}}| &= \sum_{v \in \mathcal{V}_{\mathcal{G}}} (d_v^- \cdot d_v^+ + d_v^- + d_v^+) = |\mathcal{E}_{\mathcal{H}}| + 2m, \label{eq:h_e} 
\end{align}
}}
where $\mathcal{H}$ is the line graph of the original graph $\mathcal{G}$.
The new edge-to-vertex transform $L_{\Phi}$ is shown in Figure~\ref{fig:evt}.
And we provide Algorithm~\ref{alg:evt} in Appendix~\ref{appendix:alg} for details.

Since no vertex information or edge information gets lost, the next is to find the inverse transform $L_{\Phi}^{-1}$.
Because both $G_{\varphi}$ and $H_{\Phi}$ contain a special dummy node, respectively, we consider the same strategy to merge vertices.
Before that, original vertex ids are assigned as edge ids for $\mathcal{H}_{\Phi}$. We are surprised to find that it is easy to get vertices and edges of $\mathcal{G}$ from $L(\mathcal{H}_{\Phi})$ after removing dummy edges.
The process is shown in Figure~\ref{fig:ievt},
and the algorithm is described in Algorithm~\ref{alg:ievt} in Appendix~\ref{appendix:alg}.
Theorem~\ref{theorem:ievt} shows that the \textit{inverse} of $L_{\Phi}$ always exists.

\begin{theorem}
\label{theorem:ievt}
For any $\mathcal{H}_{\Phi}$ transformed by $L_{\Phi}$ such that $\mathcal{H}_{\Phi} = L_{\Phi}(\mathcal{G})$, $L_{\Phi}^{-1}$ can always transform $\mathcal{H}_{\Phi}$ back to $\mathcal{G}$, i.e., $L_{\Phi}^{-1}(L_{\Phi}(\mathcal{G})) = \mathcal{G}$.
\end{theorem}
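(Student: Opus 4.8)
The plan is to unfold both directions, $L_{\Phi}$ and $L_{\Phi}^{-1}$, track how vertices, edges and labels are mapped, and then check that the composition acts as the identity on $\mathcal{G}$. I first fix notation for the intermediate object $\mathcal{H}_{\Phi} = L_{\Phi}(\mathcal{G})$: by the forward construction its non-dummy vertices are in bijection with the edges of $\mathcal{G}$, writing $v_e$ for the vertex attached to $e \in \mathcal{E}_{\mathcal{G}}$ with $\mathcal{X}_{\mathcal{H}_{\Phi}}(v_e) = \mathcal{Y}_{\mathcal{G}}(e)$, plus the single merged dummy $\Phi$. Its directed edges fall into the three surviving categories of \cref{fig:evt}: the $|\mathcal{E}_{\mathcal{H}}|$ edges $v_d \to v_e$ coming from an original vertex shared as $d.\text{target}=e.\text{source}$, the $m$ edges $\Phi \to v_e$, and the $m$ edges $v_e \to \Phi$, with edge labels inherited from the corresponding original vertex label via \cref{def:line}.

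The central step is a structural observation that makes the inverse well defined. For $e=(u,v)$, the in-edges of $v_e$ in $\mathcal{H}_{\Phi}$ are exactly the $d_u^-$ edges $v_d \to v_e$ (with $d.\text{target}=u$) together with $\Phi \to v_e$, so $v_e$ has indegree $d_u^- + 1 \geq 1$ and every such in-edge witnesses the same original source $u$; symmetrically its outdegree is $d_v^+ + 1 \geq 1$ and every out-edge witnesses the target $v$. This is precisely where the dummy node is indispensable: even when $u$ has zero indegree or $v$ has zero outdegree in $\mathcal{G}$, the edges to and from $\Phi$ guarantee that both the source id and the target id of $e$ remain readable, which is the reconstructible form of \cref{corollary:information_preserving} and the reason the information of every vertex is preserved. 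Assigning to each edge of $\mathcal{H}_{\Phi}$ the id of the unique original vertex it witnesses is therefore consistent (a $v_d \to v_e$ edge gets $d.\text{target}=e.\text{source}$ from either endpoint), and this id is the extra bookkeeping that $L_{\Phi}^{-1}$ consumes.

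I would then run $L$ on the id-augmented $\mathcal{H}_{\Phi}$ as in \cref{fig:ievt}. A vertex of $L(\mathcal{H}_{\Phi})$ is an edge of $\mathcal{H}_{\Phi}$ carrying its id; an edge of $L(\mathcal{H}_{\Phi})$ joins $p \to q$ whenever $p.\text{target}=q.\text{source}$ in $\mathcal{H}_{\Phi}$, so the shared vertex is either some $v_e$ or $\Phi$. When it is $v_e$, $p$ is an in-edge (id $u$) and $q$ an out-edge (id $v$) of $v_e$ and the new edge is labeled $\mathcal{X}_{\mathcal{H}_{\Phi}}(v_e)=\mathcal{Y}_{\mathcal{G}}(e)$; when it is $\Phi$ the edge is a dummy edge and is discarded. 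Merging all vertices that share an id collapses the $(d_u^- + 1)(d_v^+ + 1)$ copies associated with $e$ to a single arc from the vertex with id $u$ to the vertex with id $v$ carrying label $\mathcal{Y}_{\mathcal{G}}(e)$, i.e. exactly $e$; and the ids themselves, with labels $\mathcal{X}_{\mathcal{G}}$, become exactly the vertex set of $\mathcal{G}$ (all $n$ of them appear because $\mathcal{G}$ is connected with $m>0$). Matching vertices, edges and both label functions finishes $L_{\Phi}^{-1}(L_{\Phi}(\mathcal{G}))=\mathcal{G}$.

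I expect the main obstacle to be the multiplicity bookkeeping together with the subtlety that the reconstruction must rely on the recorded ids rather than on the labels alone: two distinct original vertices that share a vertex label and both have zero indegree (or zero outdegree) produce structurally identical edge-vertices in $\mathcal{H}_{\Phi}$, so a purely label- or adjacency-based grouping would wrongly merge them. The argument must show that grouping by the witnessed-vertex id is both well defined (\cref{lemma:information_preserving} and \cref{corollary:information_preserving}) and exhaustive, and that the no-multiedge merging convention of the preliminaries is exactly what makes the $(d_u^- + 1)(d_v^+ + 1)$ parallel copies collapse to one edge without creating or destroying any label.
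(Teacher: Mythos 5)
Your proposal is correct, and its skeleton matches the paper's proof of Theorem~\ref{theorem:ievt}: both unfold $L_{\Phi}^{-1}$ as the plain line-graph transform $L$ applied to the id-augmented $\mathcal{H}_{\Phi}$, then merge vertices sharing an id and delete dummy-labeled elements. Where you genuinely diverge is in the key step, proving that the merged edge set equals $\mathcal{E}_{\mathcal{G}}$. The paper argues by inclusion plus cardinality: it splits the edges of $L(\mathcal{H}_{\Phi})$ into four counted families ($m^2$ edges through $\Phi$, $m$ direct recoveries from the pairs $(\Phi,w)$ and $(w,\Phi)$, $2\sum_{v} d_v^- \cdot d_v^+$ mixed pairs, and $\sum_{(u,v)} d_u^- \cdot d_v^+$ edges of $L(L(\mathcal{G}))$), disposes of the last family with a 3-diwalk argument that leans on classical results about iterated line graphs, and then concludes equality from $\mathcal{E}_{\mathcal{G}} \subseteq \mathcal{E}_{\mathcal{G}'}$ together with the count $|\mathcal{E}_{\mathcal{G}'}| = m$. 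You instead prove a single invariant that covers all families at once: every edge of $L(\mathcal{H}_{\Phi})$ has a unique middle vertex; if that middle is $v_e$ with $e=(u,v)$, then by your witness observation (all in-edges of $v_e$ carry id $u$, all out-edges carry id $v$) the edge is a copy of $e$ with label $\mathcal{Y}_{\mathcal{G}}(e)$, and if the middle is $\Phi$ it is dummy-labeled and discarded. This yields both containments directly --- every surviving edge is an original edge, and every original edge appears since $(d_u^-+1)\cdot(d_v^++1) \geq 1$ --- with no counting and no appeal to the structure of $L(L(\mathcal{G}))$. The paper's route buys contact with the theory of iterated line graphs; yours is more self-contained, treats the mixed and pure families uniformly, and makes explicit why the dummy's $+1$'s are exactly what rescues the degenerate 0-indegree/0-outdegree cases that defeat the ordinary transform. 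Your closing caveat, that the merge must key on recorded ids rather than labels, is precisely the convention the paper's Algorithm~\ref{alg:ievt} enforces with the id-indexed mapping $\mathcal{I}$.
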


\begin{figure*}[!t]
    \centering
    \begin{subfigure}{.23\textwidth}
        \centering
        \includegraphics[width=4.0cm]{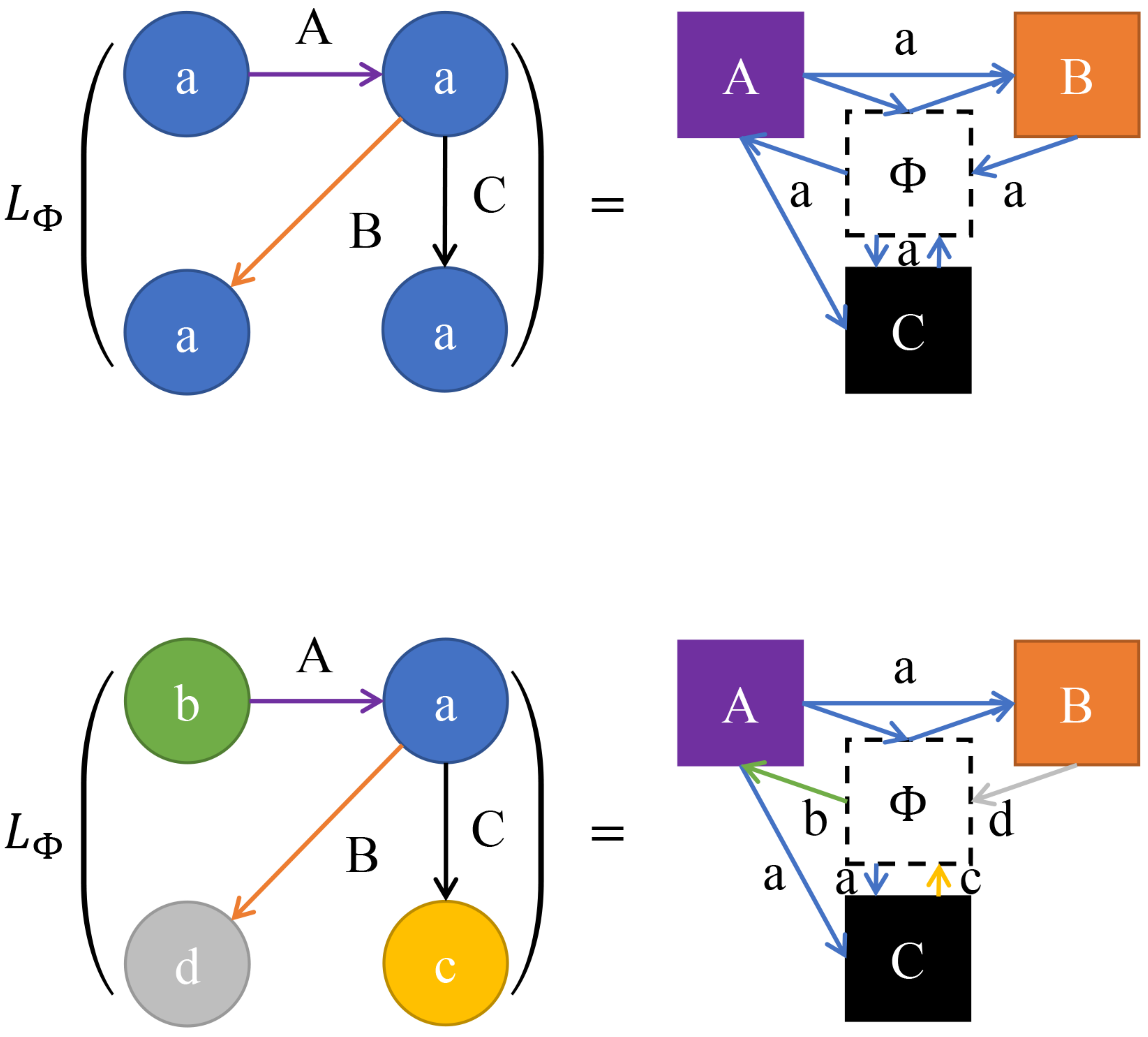}
        \caption{one vertex with 0 indegree, two vertices with 0 outdegree}
        \label{fig:claw_line7}
    \end{subfigure}
    \ 
    \begin{subfigure}{.23\textwidth}
        \centering
        \includegraphics[width=4.0cm]{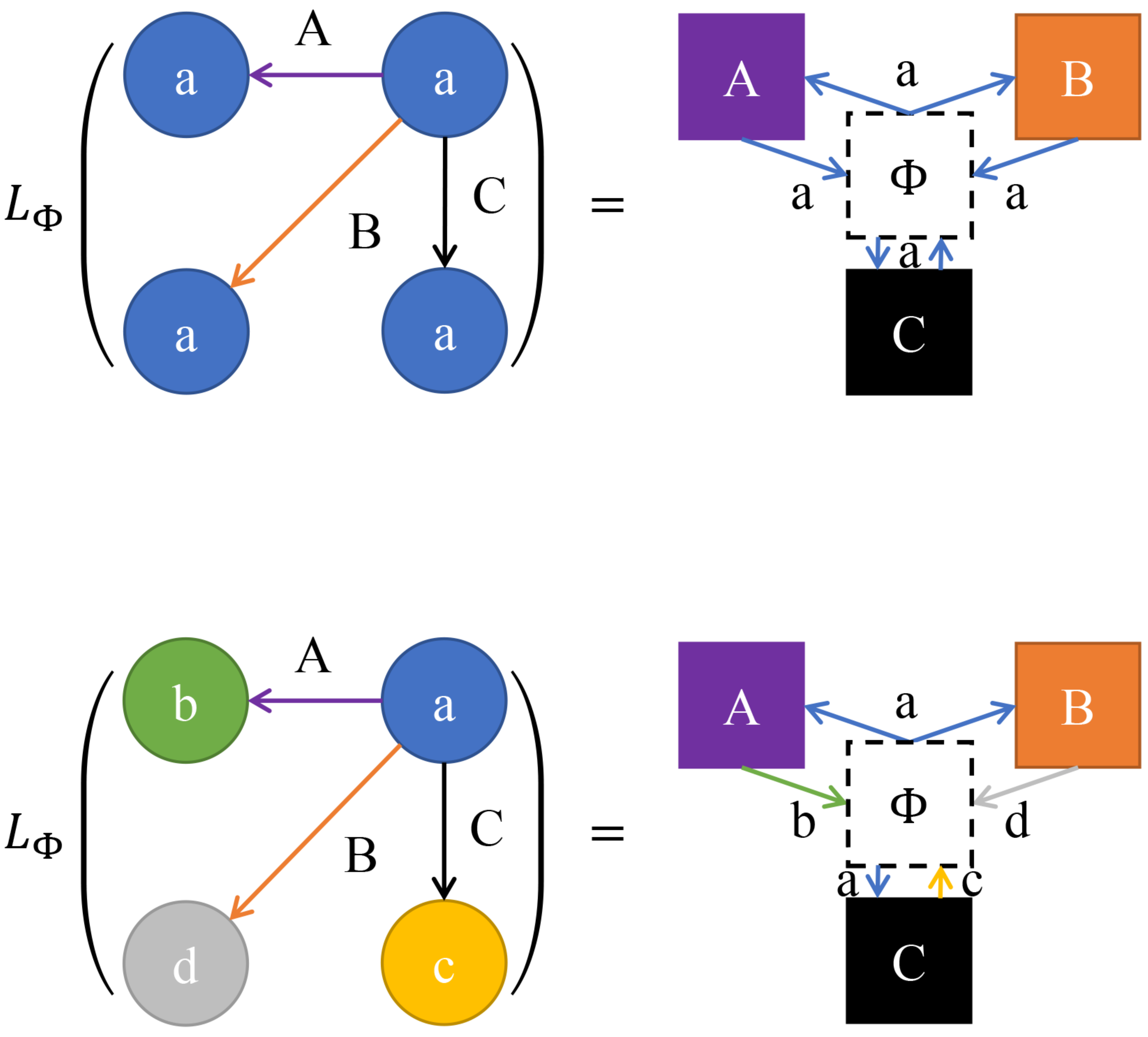}
        \caption{one vertex with 0 indegree, three vertices with 0 outdegree}
        \label{fig:claw_line10}
    \end{subfigure}
    \ 
    \begin{subfigure}{.23\textwidth}
        \centering
        \includegraphics[width=4.0cm]{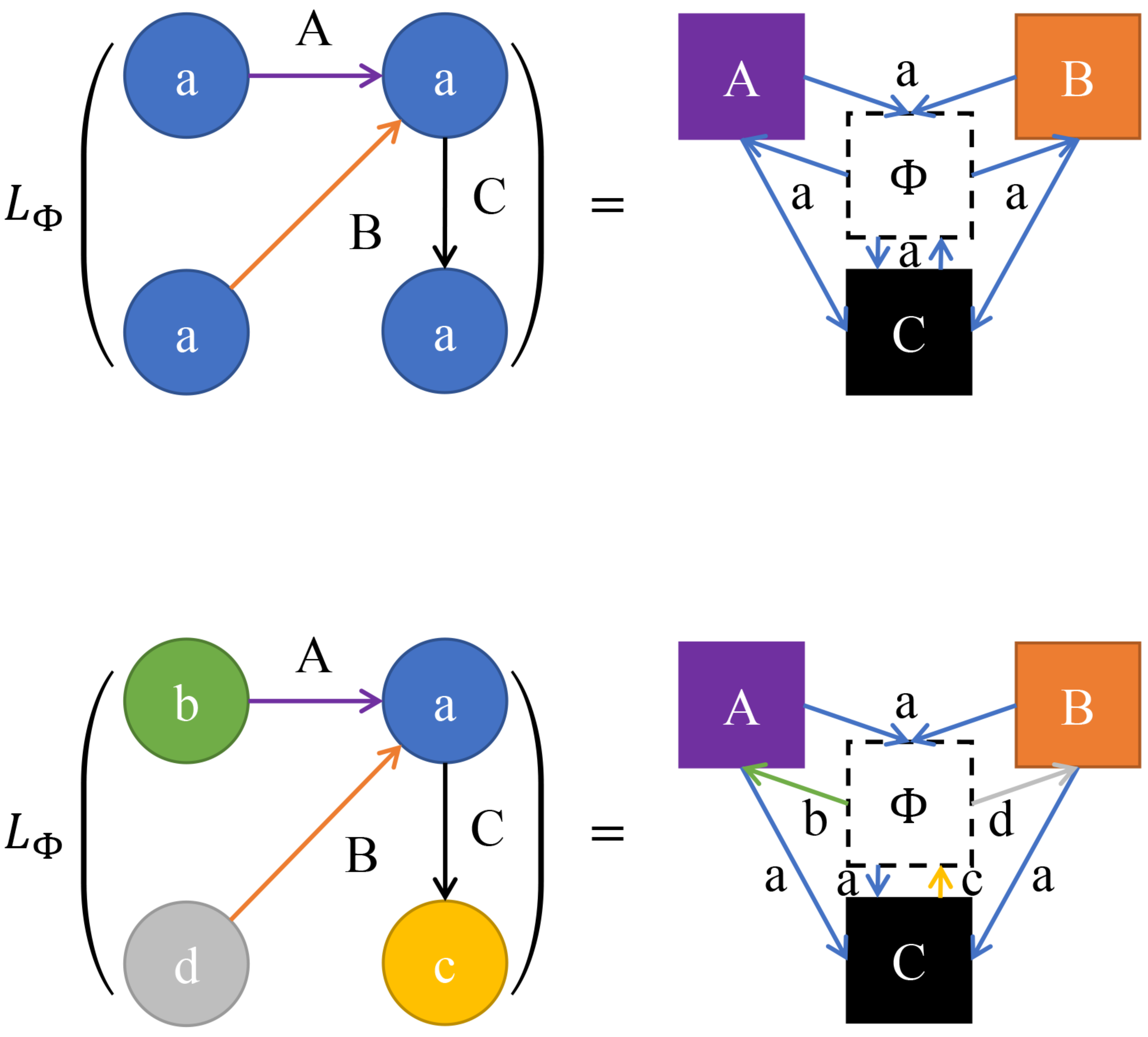}
        \caption{two vertices with 0 indegree, one vertex with 0 outdegree}
        \label{fig:claw_line8}
    \end{subfigure}
    \ 
    \begin{subfigure}{.24\textwidth}
        \centering
        \includegraphics[width=4.0cm]{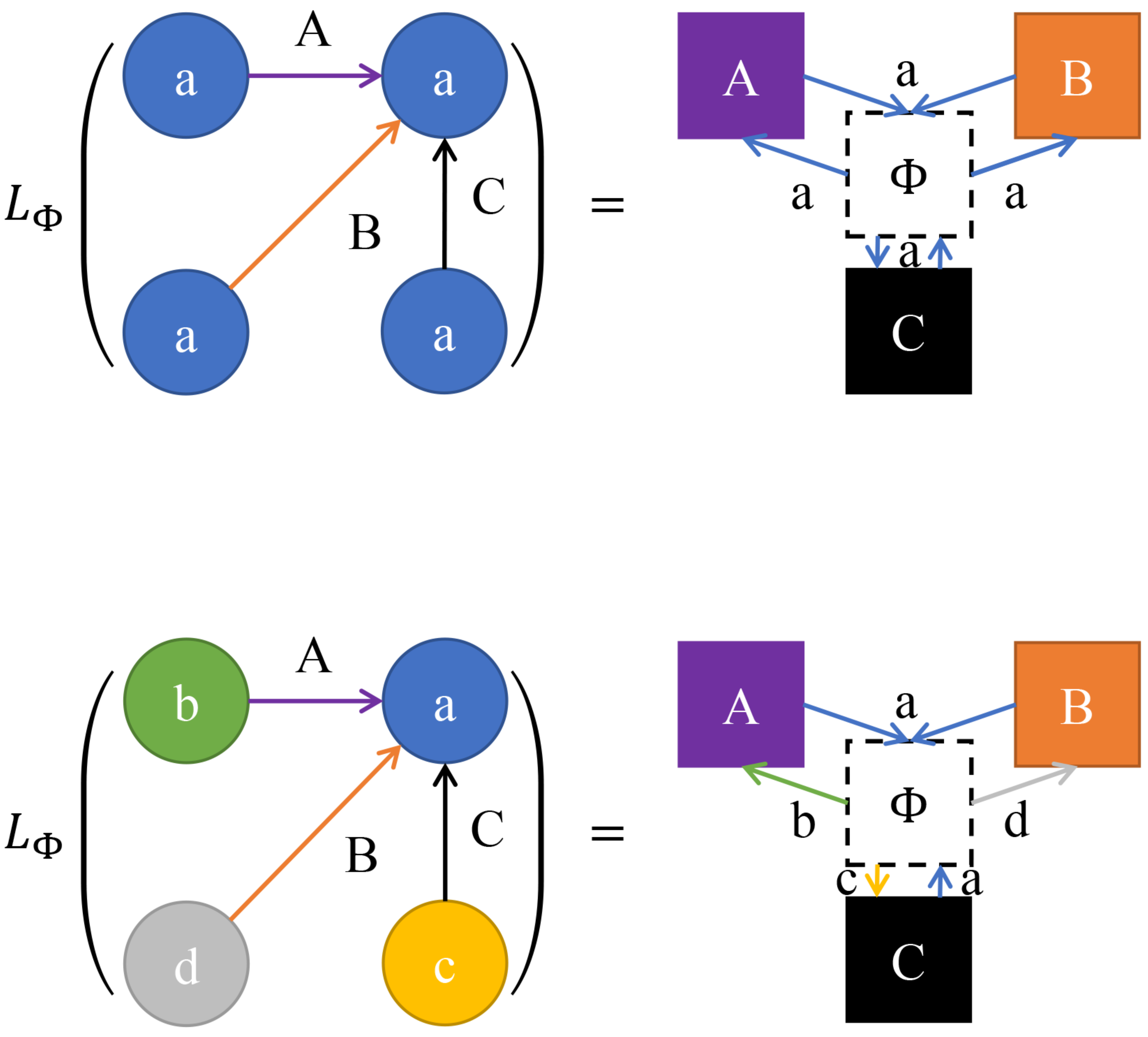}
        \caption{three vertices with 0 indegree, one vertex with 0 outdegree}
        \label{fig:claw_line9}
    \end{subfigure}
    \caption{Examples of directed 3-claws and new transformed edge-to-vertex graphs by $L_{\Phi}$.}
    \label{fig:claw_line_dummy}
\end{figure*}

\begin{proof}
\label{proof:ievt}
Let $\mathcal{G}'$ be $L_{\Phi}^{-1}(L_{\Phi}(\mathcal{G}))$,
$\mathcal{G}'_{\scriptsize{\textcircled{\raisebox{-.9pt} {i}}}}$ be the resulting graph of the step $\textcircled{\raisebox{-.9pt} {i}}$ ($\text{i} \in \{1,2,3\}$) of $L_{\Phi}^{-1}$.

Each vertex $w$ of $\mathcal{H}_{\Phi}$ except the dummy $\Phi$ corresponds to one edge (e.g., $e=(u, v)$) of $\mathcal{G}$, and $\Phi$ connects $w$ with two edges where $(\Phi, w)$ is associated with label $\mathcal{X}_{\mathcal{G}}(u)$ and $(w, \Phi)$ is associated with label $\mathcal{X}_{\mathcal{G}}(v)$.
After the step $\textcircled{\raisebox{-.9pt} {1}}$, $(\Phi, w)$ is transformed as one vertex with id $u$ and label $\mathcal{X}_{\mathcal{G}}(u)$, and $(w, \Phi)$ is converted as one vertex with id $v$ and label $\mathcal{X}_{\mathcal{G}}(v)$.
Similarly, other edges not connecting $\Phi$ can also be transformed as vertices.
We have:
\scalebox{0.88}{\parbox{1.13\linewidth}{
\begin{align}
    |\mathcal{V}_{\mathcal{G}'_{\tiny{\textcircled{\raisebox{-.9pt} {1}}}}}| = \sum_{v \in \mathcal{V}_{\mathcal{G}}} (d_v^- \cdot d_v^+ + d_v^- + d_v^+), \nonumber
\end{align}
}}
where vertex $v$ in $\mathcal{G}$ appears $d_v^- \cdot d_v^+ + d_v^- + d_v^+$ times (that is nonzero) in $\mathcal{G}'_{\tiny{\textcircled{\raisebox{-.9pt} {1}}}}$.
On the other hand, the indegree and the outdegree of $w$ are $d_u^- + 1$ and $d_v^+ + 1$, respectively.
Thus, we get:
\scalebox{0.88}{\parbox{1.13\linewidth}{
\begin{align}
    |\mathcal{E}_{\mathcal{G}'_{\tiny{\textcircled{\raisebox{-.9pt} {1}}}}}| &= \sum_{w \in \mathcal{H}_{\Phi}} d_w^- \cdot d_w^+ \nonumber \\
    &= m^2 + \sum_{(u, v) \in \mathcal{E}_{\mathcal{G}}} (d_u^- + 1) \cdot (d_v^+ + 1) \nonumber \\
    &= 2 \sum_{v \in \mathcal{V}_{\mathcal{G}}}d_v^- \cdot d_v^+ + \sum_{(u, v) \in \mathcal{E}_{\mathcal{G}}} d_u^- \cdot d_v^+ + m^2 + m, \nonumber
\end{align}
}}
where $2 \sum_{v \in \mathcal{V}_{\mathcal{G}}}d_v^- \cdot d_v^+$ says inedge $(?, v)$ is copied $d_v^+$ times thanks to $(w, \Phi)$ and outedge $(v, ?)$ is copied $d_v^+$ times due to $(\Phi, w)$, $\sum_{(u, v) \in \mathcal{E}_{\mathcal{G}}} d_u^- \cdot d_v^+$ refers to the transformed edges by $L(L(\mathcal{G}))$, $m^2$ corresponds to dummy edges, $m$ indicates original edges recovered by $(\Phi, w)$ and $(w, \Phi)$ directly.

As we store vertex ids and vertex labels from $\mathcal{G}$ in edges of $\mathcal{H}_{\Phi}$, we get $\mathcal{V}_{\mathcal{G}'_{\scriptsize{\textcircled{\raisebox{-.9pt} {2}}}}}$ is exactly the same as $\mathcal{V}_{\mathcal{G}}$ after merging vertices with same information.
Since the step $\textcircled{\raisebox{-.9pt} {3}}$ does not remove vertices further,  for $\mathcal{V}_{\mathcal{G}'} = \mathcal{V}_{\mathcal{G}}$ holds.

Edges are merged based on source ids and target ids in the step $\textcircled{\raisebox{-.9pt} {2}}$, and additional dummy edges are removed in the step $\textcircled{\raisebox{-.9pt} {3}}$.
Therefore, $\mathcal{E}_{\mathcal{G}} \subseteq \mathcal{E}_{\mathcal{G}'} \subseteq \mathcal{E}_{\mathcal{G}'_{\tiny{\textcircled{\raisebox{-.9pt} {1}}}}}$.
Next, we prove $\mathcal{E}_{\mathcal{G}} = \mathcal{E}_{\mathcal{G}'}$ by $\forall (u'', v'') \in \mathcal{E}_{L(L(\mathcal{G}))}, \mathcal{Y}_{L(L(\mathcal{G}))}((u'', v'')) \subseteq \mathcal{Y}_{\mathcal{G}}((u''.id, v''.id))$.
\citeauthor{beineke1968derived}~(\citeyear{beineke1968derived}) and \citeauthor{beineke1982connection}~(\citeyear{beineke1982connection}) discussed $L(L(\mathcal{G}))$ in more detail.
We simply prove in another direction.
Let $\mathcal{S}_{\mathcal{G}} = \{u' \in \mathcal{V}_{\mathcal{G}} | d_{u'}^- = 0\}$,  $\mathcal{T}_{\mathcal{G}} = \{v' \in \mathcal{V}_{\mathcal{G}} | d_{v'}^+ = 0\}$, and $\mathcal{U}_{\mathcal{G}} = \{v | d_v^- \cdot d_v^+ > 0\} = \mathcal{V}_{\mathcal{G}} - \mathcal{S}_{\mathcal{G}} \cup \mathcal{T}_{\mathcal{G}}$
then any edge $e$ in $\mathcal{S}_{\mathcal{G}} \times \mathcal{T}_{\mathcal{G}}$ corresponds to a isolated vertex in $L(\mathcal{G})$ so that it gets lost in $L(L(\mathcal{G}))$, i.e., $\phi \subseteq \mathcal{Y}_{\mathcal{G}}(e)$.
For other vertices $\mathcal{U}_{\mathcal{G}} = \{v | d_v^- + d_v^+ > 0\}$, any edge $e=(u, v) \in \mathcal{U}_{\mathcal{G}} \times \mathcal{U}_{\mathcal{G}} \cap \mathcal{E}_{\mathcal{G}}$ is located in at least one \textit{3-diwalk}, e.g., $s \rightarrow u \rightarrow v \rightarrow t$ where $s, t \in \mathcal{V}_{\mathcal{G}}$.
Then $L(\mathcal{G})$ converts this 3-diwalk to a \textit{2-diwalk} $su' \rightarrow uv' \rightarrow vt'$, and $L(L(\mathcal{G}))$ finally results in a \textit{1-diwalk} $u'' \rightarrow v''$ where $u''$ corresponds to $u$ and $v''$ corresponds to $v$, i.e., $\mathcal{Y}_{L(L(\mathcal{G}))}((u'', v'')) = \mathcal{Y}_{\mathcal{G}}((u, v))$.
Hence, $\forall (u'', v'') \in \mathcal{E}_{L(L(\mathcal{G}))}, \mathcal{Y}_{L(L(\mathcal{G}))}((u'', v'')) \subseteq \mathcal{Y}_{\mathcal{G}}((u''.id, v''.id))$.
That is to say, both $2 \sum_{v \in \mathcal{V}_{\mathcal{G}}}d_v^- \cdot d_v^+$ edges and $\sum_{(u, v) \in \mathcal{E}_{\mathcal{G}}} d_u^- \cdot d_v^+$ edges are going to be merged into $m$ edges in the step $\textcircled{\raisebox{-.9pt} {2}}$ and $|\mathcal{E}_{\mathcal{G}'}| = m$. Considering $\mathcal{E}_{\mathcal{G}} \subseteq \mathcal{E}_{\mathcal{G}'}$ and $m = \mathcal{E}_{\mathcal{G}}$, we have $\mathcal{E}_{\mathcal{G}'} = \mathcal{E}_{\mathcal{G}}$.
Clearly, $\mathcal{G}' = \mathcal{G}$. And this finishes the proof.
\end{proof}

Considering the symmetry between $\mathcal{G}_{\varphi}$ and $\mathcal{H}_{\Phi}$ where the dummy node serves as a hub to connect all other vertices, 
we call $\mathcal{G}_{\varphi}$ and $\mathcal{H}_{\Phi}$ \textit{conjugate} of each other.

Figure~\ref{fig:claw_line_dummy} illustrates the new transformed edge-to-vertex graphs over 3-claws.
As observed, all vertex information is preserved in new transformed graphs, the union of line graphs and six heterogeneous edges connecting dummy $\Phi$.

\subsection{Transforms and Morphisms}

As $L_{\Phi}$ is a structure-preserving map, we are also interested in the connections between $L_{\Phi}$ and morphisms.
In graph theory, one of the most important bijective morphisms is the isomorphism.

\begin{definition}[Isomorphism]
\label{def:iso}
A graph $\mathcal{G}_1$ is \textit{isomorphic} to a graph $\mathcal{G}_2$ if there is a bijection $f: \mathcal{V}_{\mathcal{G}_1} \rightarrow \mathcal{V}_{\mathcal{G}_2}$ such that: \\
\scalebox{0.88}{\parbox{1.13\linewidth}{
\begin{itemize}
    \item $\forall v \in \mathcal{V}_{\mathcal{G}_1}, \mathcal{X}_{\mathcal{G}_1}(v) = \mathcal{X}_{\mathcal{G}_2}(f(v))$, 
    \item $\forall v' \in \mathcal{V}_{\mathcal{G}_2}, \mathcal{X}_{\mathcal{G}_2}(v') = \mathcal{X}_{\mathcal{G}_1}(f^{-1}(v'))$,
    \item $\forall (u, v) \in \mathcal{E}_{\mathcal{G}_1}, \mathcal{Y}_{\mathcal{G}_1}((u, v)) = \mathcal{Y}_{\mathcal{G}_2}((f(u), f(v)))$,
    \item $\forall (u', v') \in \mathcal{E}_{\mathcal{G}_2}, \mathcal{Y}_{\mathcal{G}_2}((u', v')) = \mathcal{Y}_{\mathcal{G}_1}((f^{-1}(u'), f^{-1}(v')))$.
\end{itemize}
}}
\end{definition}
We write $\mathcal{G}_1 \simeq \mathcal{G}_2$ for such the isomorphic property, and $F$ is named as an \textit{isomorphism}. For two isomorphic graphs, they are also regarded as both $\textit{permutation}$.



$L_{\Phi}$ and $L_{\Phi}^{-1}$ are also morphisms from graphs to graphs.
In particular, $L_{\Phi}^{-1}$ is an \textit{epimorphism} based on Theorem~\ref{theorem:ievt}.
The following Propositions~\ref{proposition:monomorphism} shows that $L_{\Phi}$ is a \textit{monomorphism}.

\begin{proposition}
\label{proposition:monomorphism}
$L_{\Phi}$ is a monomorphism.
\end{proposition}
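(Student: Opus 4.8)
The plan is to prove the proposition by exhibiting a left inverse for $L_{\Phi}$ and then invoking the standard fact that any structure-preserving map possessing a left inverse is a monomorphism. Recall that, categorically, $L_{\Phi}$ is a monomorphism precisely when it is left-cancellable: for any graph $\mathcal{C}$ and any pair of morphisms $\alpha, \beta: \mathcal{C} \to \mathcal{G}$, the equality $L_{\Phi} \circ \alpha = L_{\Phi} \circ \beta$ forces $\alpha = \beta$. The key observation is that Theorem~\ref{theorem:ievt} already supplies exactly such a left inverse, namely $L_{\Phi}^{-1}$, since it asserts $L_{\Phi}^{-1} \circ L_{\Phi} = \mathrm{id}$ on the class of directed connected heterogeneous multigraphs.

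First I would make the left-cancellation argument explicit. Assuming $L_{\Phi} \circ \alpha = L_{\Phi} \circ \beta$, composing on the left with $L_{\Phi}^{-1}$ and applying Theorem~\ref{theorem:ievt} gives
\begin{align}
\alpha &= (L_{\Phi}^{-1} \circ L_{\Phi}) \circ \alpha = L_{\Phi}^{-1} \circ (L_{\Phi} \circ \alpha) \nonumber \\
&= L_{\Phi}^{-1} \circ (L_{\Phi} \circ \beta) = (L_{\Phi}^{-1} \circ L_{\Phi}) \circ \beta = \beta, \nonumber
\end{align}
which is exactly the monomorphism condition. Equivalently, reading $L_{\Phi}$ as a map on isomorphism classes of graphs, I would argue injectivity directly: if $L_{\Phi}(\mathcal{G}_1) \simeq L_{\Phi}(\mathcal{G}_2)$, then applying $L_{\Phi}^{-1}$ to both sides and using Theorem~\ref{theorem:ievt} yields $\mathcal{G}_1 \simeq \mathcal{G}_2$, so distinct inputs cannot collapse onto the same transformed graph.

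The step I expect to be the main obstacle is not the cancellation computation, which is purely formal, but justifying that $L_{\Phi}^{-1}$ is a genuine morphism in the same category and, in particular, that it is well defined up to isomorphism. Theorem~\ref{theorem:ievt} only guarantees the equality $L_{\Phi}^{-1}(L_{\Phi}(\mathcal{G})) = \mathcal{G}$ for the concrete construction; to lift this to isomorphism classes I would have to verify that the algorithm defining $L_{\Phi}^{-1}$ commutes with relabelings, i.e., $\mathcal{H}_1 \simeq \mathcal{H}_2$ implies $L_{\Phi}^{-1}(\mathcal{H}_1) \simeq L_{\Phi}^{-1}(\mathcal{H}_2)$, so that the left-inverse identity is compatible with the ambient notion of isomorphism from Definition~\ref{def:iso}. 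Granting that $L_{\Phi}$ and $L_{\Phi}^{-1}$ are morphisms (as asserted in the surrounding discussion) and that $L_{\Phi}^{-1}$ respects isomorphism, the left-inverse property established in Theorem~\ref{theorem:ievt} closes the argument, and hence $L_{\Phi}$ is a monomorphism.
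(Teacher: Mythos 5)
Your proposal is correct and takes essentially the same approach as the paper: the paper's proof likewise invokes Theorem~\ref{theorem:ievt} to obtain the left inverse $L_{\Phi}^{-1}$ and cancels it, concluding $L_{\Phi}(\mathcal{G}_1) = L_{\Phi}(\mathcal{G}_2) \Rightarrow \mathcal{G}_1 = \mathcal{G}_2$. The only difference is presentational: the paper states the monomorphism condition directly as injectivity on concrete graphs (literal equality rather than isomorphism classes), so the well-definedness-up-to-relabeling caveat you flag as the main obstacle simply does not arise in its argument.
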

\begin{proof}
\label{proof:monomorphism}
To prove that $L_{\Phi}$ is a monomorphism, we seek to show $L_{\Phi}(\mathcal{G}_1) = L_{\Phi}(\mathcal{G}_2) \rightarrow \mathcal{G}_1 = \mathcal{G}_2$ for any $\mathcal{G}_1, \mathcal{G}_2$.
Theorem~\ref{theorem:ievt} shows that $L_{\Phi}^{-1}$ always exists, and $L_{\Phi}^{-1}(L_{\Phi}(\mathcal{G}_1)) = \mathcal{G}_1$, $L_{\Phi}^{-1}(L_{\Phi}(\mathcal{G}_2)) = \mathcal{G}_2$ always holds.
Given $L_{\Phi}(\mathcal{G}_1) = L_{\Phi}(\mathcal{G}_2)$, we have $L_{\Phi}^{-1}(L_{\Phi}(\mathcal{G}_1)) = L_{\Phi}^{-1}(L_{\Phi}(\mathcal{G}_2))$, which implies $\mathcal{G}_1 = \mathcal{G}_2$. And this finishes the proof.
\end{proof}

Propositions~\ref{proposition:monomorphism} is important because we can apply the edge-to-vertex transform $L_{\Phi}$ before other morphisms and functions without breaking properties, such as Corollary~\ref{corollary:isomorphism_evt}.

\begin{corollary}
\label{corollary:isomorphism_evt}
Isomorphisms hold after $L_{\Phi}$.
\end{corollary}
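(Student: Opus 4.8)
The plan is to read \emph{Isomorphisms hold after $L_{\Phi}$} as the two–sided statement $\mathcal{G}_1 \simeq \mathcal{G}_2 \iff L_{\Phi}(\mathcal{G}_1) \simeq L_{\Phi}(\mathcal{G}_2)$, so that isomorphism testing commutes with the transform. The forward implication is the substantive part, and I would prove it by explicitly lifting the given isomorphism on the originals to one on the transformed graphs; the converse I would reduce to Theorem~\ref{theorem:ievt} using the conjugacy between $\mathcal{G}_{\varphi}$ and $\mathcal{H}_{\Phi}$.

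For the forward direction, suppose $\mathcal{G}_1 \simeq \mathcal{G}_2$ via a bijection $f$ meeting the four conditions of Definition~\ref{def:iso}. By \eqref{eq:h_v} the non-dummy vertices of $L_{\Phi}(\mathcal{G})$ are exactly $\mathcal{E}_{\mathcal{G}}$, with a single extra vertex $\Phi$. I would define $f' : \mathcal{V}_{L_{\Phi}(\mathcal{G}_1)} \to \mathcal{V}_{L_{\Phi}(\mathcal{G}_2)}$ by $\Phi_1 \mapsto \Phi_2$ and, for the vertex $w$ associated with $e=(u,v)\in\mathcal{E}_{\mathcal{G}_1}$, by sending $w$ to the vertex associated with $(f(u),f(v))\in\mathcal{E}_{\mathcal{G}_2}$. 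Since $f$ preserves edges in both directions, the induced map on edges is a bijection, so $f'$ is a bijection. It then remains to check the four isomorphism conditions of Definition~\ref{def:iso} for $f'$ against Definition~\ref{def:line}: vertex labels of $L_{\Phi}(\mathcal{G})$ are edge labels of $\mathcal{G}$, and $\mathcal{Y}_{\mathcal{G}_1}((u,v))=\mathcal{Y}_{\mathcal{G}_2}((f(u),f(v)))$ gives vertex-label preservation, while $\Phi$ carries the special dummy label and is matched trivially.

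For the edge conditions I would split $\mathcal{E}_{L_{\Phi}(\mathcal{G})}$ into the three families identified after \eqref{eq:h_e}: the $|\mathcal{E}_{\mathcal{H}}|$ line-graph edges that pass through an original vertex $v$ and carry the label $\mathcal{X}_{\mathcal{G}}(v)$, and the two families of $m$ dummy edges $(\Phi,w)$ and $(w,\Phi)$ carrying the source/target labels $\mathcal{X}_{\mathcal{G}}(u)$ and $\mathcal{X}_{\mathcal{G}}(v)$. Each adjacency is preserved because $f$ preserves the incidence $d.\text{target}=e.\text{source}$, and each label is preserved because $\mathcal{X}_{\mathcal{G}_1}(v)=\mathcal{X}_{\mathcal{G}_2}(f(v))$; this establishes $L_{\Phi}(\mathcal{G}_1)\simeq L_{\Phi}(\mathcal{G}_2)$. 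For the converse I would observe that $\Phi$ is the unique vertex carrying the special dummy label (and, by \eqref{eq:h_e}, the unique vertex adjacent to and from every other vertex), so any isomorphism $h$ between $L_{\Phi}(\mathcal{G}_1)$ and $L_{\Phi}(\mathcal{G}_2)$ must send dummy to dummy and restrict to a label- and adjacency-preserving bijection of the edge-vertices. Since $\mathcal{H}_{\Phi}$ is conjugate to $\mathcal{G}_{\varphi}$, the inverse transform $L_{\Phi}^{-1}$ preserves isomorphisms by the mirror of the argument above, and applying it to both sides together with $L_{\Phi}^{-1}(L_{\Phi}(\mathcal{G}_i))=\mathcal{G}_i$ from Theorem~\ref{theorem:ievt} yields $\mathcal{G}_1\simeq\mathcal{G}_2$.

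The main obstacle I expect is the converse: one must guarantee that the dummy node is \emph{canonically} identified by $h$, so that the extra vertex is never confused with an edge-vertex, and that the degenerate cases flagged after Definition~\ref{def:line} — isolated vertices and the empty or complete line graphs — do not break the bookkeeping. The dummy construction of Corollary~\ref{corollary:information_preserving} is exactly what removes these pathologies, since every original vertex then has nonzero in- and out-degree and $\Phi$ is distinguished by both its label and its hub connectivity; with that identification fixed, the reconstruction of $L_{\Phi}^{-1}$ transports $h$ to the required vertex bijection $f$.
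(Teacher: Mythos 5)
Your forward direction is correct, and it is the whole content of the corollary as the paper actually uses it: Corollary~\ref{corollary:permutation_evt} only needs that $\mathcal{G}_1 \simeq \mathcal{G}_2$ implies $L_{\Phi}(\mathcal{G}_1) \simeq L_{\Phi}(\mathcal{G}_2)$. Your explicit lift --- $\Phi_1 \mapsto \Phi_2$, $w_{(u,v)} \mapsto w_{(f(u),f(v))}$, followed by a family-by-family check of the line-graph edges and the two dummy-edge families against Definition~\ref{def:line} --- is a fully written-out version of what the paper leaves implicit; the paper gives no proof at all, presenting the corollary as an immediate consequence of the structural nature of $L_{\Phi}$ right after Proposition~\ref{proposition:monomorphism}.

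The genuine gap is in the converse that you introduce by reading the statement as an ``iff''. The step ``the inverse transform $L_{\Phi}^{-1}$ preserves isomorphisms by the mirror of the argument above'' fails, because $L_{\Phi}^{-1}$ is not a function of the labelled graph alone: Algorithm~\ref{alg:ievt} and the proof of Theorem~\ref{theorem:ievt} consume the edge \emph{ids} that $L_{\Phi}$ writes into $\mathcal{H}_{\Phi}$ (the original vertex ids), and it is these ids that tell the merging step which of the many label-copies of a vertex are the \emph{same} vertex. An isomorphism in the sense of Definition~\ref{def:iso} preserves labels, not ids, so it gives no license to apply $L_{\Phi}^{-1}$ compatibly on both sides. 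This is not a repairable technicality, because the converse is false. Counterexample: let $\mathcal{G}_1$ be the out-claw $a \to b$, $a \to c$ and $\mathcal{G}_2$ the in-claw $u \to v$, $w \to v$, with all vertex labels equal and all edge labels equal; both are connected with $m=2>0$. Neither graph contains a directed $2$-path, so neither transformed graph has any line-graph edge; each $L_{\Phi}(\mathcal{G}_i)$ consists of exactly two edge-vertices joined to $\Phi$ by one in-edge and one out-edge, all carrying identical labels, hence $L_{\Phi}(\mathcal{G}_1) \simeq L_{\Phi}(\mathcal{G}_2)$, while $\mathcal{G}_1 \not\simeq \mathcal{G}_2$ (outdegree sequences $(2,0,0)$ versus $(1,1,0)$). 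In other words, the claw pathology of Figure~\ref{fig:claw_line} survives $L_{\Phi}$ at the label level, contrary to your closing claim that the dummy construction removes it; it is neutralized only when the id annotations are available, which is exactly what Theorem~\ref{theorem:ievt} exploits and what your transported isomorphism cannot supply. The fix is to state and prove the corollary one-directionally, as in your second and third paragraphs, and drop the converse.
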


More generally, permutation-invariant functions are expected to acquire the same outputs among permutations.
In view of Corollary~\ref{corollary:isomorphism_evt}, we get the following consequence:

\begin{corollary}
\label{corollary:permutation_evt}
If a function $h$ is permutation-invariant, then $h \circ L_{\Phi}$ is also permutation-invariant.
\end{corollary}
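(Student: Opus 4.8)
The plan is to reduce the statement to a one-line composition of two facts already established: that $L_{\Phi}$ sends isomorphic graphs to isomorphic graphs (Corollary~\ref{corollary:isomorphism_evt}), and that a permutation-invariant $h$ cannot distinguish isomorphic graphs (its defining property, given that the paper identifies permutations with the isomorphism relation $\simeq$). Unwinding both definitions should make the argument a short chain of implications, so no structural or computational work remains.

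First I would pin down the meaning of permutation-invariance in the paper's own terms: $h$ is permutation-invariant precisely when $\mathcal{G}_1 \simeq \mathcal{G}_2$ forces $h(\mathcal{G}_1) = h(\mathcal{G}_2)$. Then I would take an arbitrary isomorphic pair $\mathcal{G}_1 \simeq \mathcal{G}_2$ and apply Corollary~\ref{corollary:isomorphism_evt} to obtain $L_{\Phi}(\mathcal{G}_1) \simeq L_{\Phi}(\mathcal{G}_2)$. Feeding these two isomorphic images into $h$ and invoking the permutation-invariance of $h$ gives $h(L_{\Phi}(\mathcal{G}_1)) = h(L_{\Phi}(\mathcal{G}_2))$, that is, $(h \circ L_{\Phi})(\mathcal{G}_1) = (h \circ L_{\Phi})(\mathcal{G}_2)$. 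Quantifying over all such pairs yields that $h \circ L_{\Phi}$ is permutation-invariant, which is exactly the claim.

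The argument is this short because the heavy lifting was done earlier; the only place that needs attention is ensuring Corollary~\ref{corollary:isomorphism_evt} is used in the \emph{preservation} direction ($\mathcal{G}_1 \simeq \mathcal{G}_2 \rightarrow L_{\Phi}(\mathcal{G}_1) \simeq L_{\Phi}(\mathcal{G}_2)$) rather than a converse. I expect the main subtlety, insofar as there is one, to be purely definitional: confirming that the notion of ``permutation'' appearing in the hypothesis on $h$ coincides with the graph isomorphism $\simeq$ that $L_{\Phi}$ preserves, so that the composition closes without a mismatch in what the relabelings range over (vertex indices of the transformed graph, including the dummy node $\Phi$, versus those of the original). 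Once that identification is made explicit, the chain closes and no further computation is required.
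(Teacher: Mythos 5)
Your proof is correct and follows exactly the route the paper intends: the paper derives Corollary~\ref{corollary:permutation_evt} directly ``in view of'' Corollary~\ref{corollary:isomorphism_evt}, i.e., $\mathcal{G}_1 \simeq \mathcal{G}_2$ gives $L_{\Phi}(\mathcal{G}_1) \simeq L_{\Phi}(\mathcal{G}_2)$, and permutation-invariance of $h$ then forces equal outputs, which is precisely your chain. Your added care about identifying ``permutation'' with the isomorphism relation $\simeq$ and about using the preservation direction of Corollary~\ref{corollary:isomorphism_evt} matches the paper's framing (which explicitly equates isomorphic graphs with permutations of each other), so there is no mismatch to resolve.
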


This helps us to apply vertex-centric graph kernel functions and graph neural networks to learn edge-centric representations.
And all above indicate that the graph with a dummy node $\mathcal{G}_{\varphi}$ is better to learn structure information than $\mathcal{G}$.
\section{Methodology}

In this section, we extend effective machine learning kernel functions and deep graph neural networks with dummy nodes and our proposed edge-to-vertex transform $L_{\Phi}$.

\subsection{Extensions of Graph Kernel Functions}
\label{section:extension_gk}
A graph kernel (GK) is a symmetric, positive semi-definite function defined on the graph space.
It is usually expressed as an inner product in Hilbert space~\cite{kriege2020asurvey} such that $k(\mathcal{G}_1, \mathcal{G}_2) = \langle h(\mathcal{G}_1), h(\mathcal{G}_2) \rangle$, where $k$ is the kernel function and $h$ is the permutation-invariant function from graph space to Hilbert space.
In general, $k$ measures the similarity between two graphs, and the graph similarity is directly related to graph comparison in machine learning.
In this paper, we aim to explore the power of dummy nodes, so we adopt graph-structure-sensitive and attribute-sensitive kernels, 
like Weisfeiler-Lehman Subtree Kernel (WL)~\cite{shervashidze2011weisfeiler}.
We generalize these kernels with dummy nodes (denoted as $k_{\varphi}$) and the edge-to-vertex transform (denoted as $k_{\Phi}$): \\
\scalebox{0.88}{\parbox{1.13\linewidth}{
\begin{align}
    k_{\varphi}(\mathcal{G}_1, \mathcal{G}_2) &= k(\mathcal{G}_{1}, \mathcal{G}_{2}) + k(\mathcal{G}_{{\varphi}_1}, \mathcal{G}_{{\varphi}_2}) \nonumber \\
    &= \langle h(\mathcal{G}_{1}), h(\mathcal{G}_{2}) \rangle + \langle h(\mathcal{G}_{{\varphi}_1}), h(\mathcal{G}_{{\varphi}_2}) \rangle, \label{eq:kernel_varphi} \\
    k_{\Phi}(\mathcal{G}_1, \mathcal{G}_2) &= k(\mathcal{G}_{1}, \mathcal{G}_{2}) + k(\mathcal{H}_{{\Phi}_1}, \mathcal{H}_{{\Phi}_2}) \nonumber \\
    &= \langle h(\mathcal{G}_{1}), h(\mathcal{G}_{2}) \rangle + \langle h(L_{\Phi}(\mathcal{G}_{1})), h(L_{\Phi}(\mathcal{G}_{2})) \rangle, \label{eq:kernel_Phi}
\end{align}
}}
where $\mathcal{G}_{{\varphi}_1}$ and $\mathcal{G}_{{\varphi}_2}$ respectively correspond to graphs $\mathcal{G}_{1}$ and $\mathcal{G}_{2}$ with a dummy node $\varphi$ and dummy edges, and $\mathcal{H}_{{\Phi}_1} = L_{\Phi}(\mathcal{G}_{1})$ and $\mathcal{H}_{{\Phi}_2} = L_{\Phi}(\mathcal{G}_{2})$ are transformed by the proposed $L_{\Phi}$, each of which contains a dummy node $\Phi$.
We add the $k(\mathcal{G}_{1}, \mathcal{G}_{2})$ term in $k_{\varphi}$ and $k_{\Phi}$ to enforce the kernel functions to pay more attention to original structures; otherwise, the dummy node and dummy edges may bring some side effects.

\subsection{Extensions of Graph Neural Networks}

Many graph neural networks (GNNs) have been proposed to learn graph structures, such as GCN~\cite{kipf2017semi}, GraphSAGE~\cite{hamilton2017inductive}, GIN~\cite{xu2019how}.
Most can be unified in the \textit{Message Passing} framework~\cite{gilmer2017neural}: \\
\scalebox{0.88}{\parbox{1.13\linewidth}{
\begin{align}
    \bm \Delta_v^{(t+1)} &= Aggregate(\{ Message(\bm x_{v}^{(t)}, \bm x_{u}^{(t)}, \bm y_{(u, v)})| u \in \mathcal{N}_{v}\}),  \nonumber \\
    \bm x_{v}^{(t+1)} &= Update(\bm x_{v}, \bm \Delta_v^{(t+1)}), \nonumber 
\end{align}
}}
where $\bm x_{v}^{(t)}$ is the hidden state of vertex $v$ at the $t$-th layer network, $\mathcal{N}_{v}$ is $v$'s neighbor collection, $\bm y_{(u, v)}$ is the edge tensor for $(u, v)$, $\bm \Delta_v^{(t+1)}$ is the aggregated message from neighbors, $Aggregate$ is a permutation-invariant functions (e.g., $Sum$) to aggregate all messages as one, and $Update$ is a combination function (e.g., $Add$) to fuse $\Delta_v^{(t+1)}$ and $\bm x_{v}^{(t)}$ as the new state $\bm x_{v}^{(t+1)}$.
After introducing a dummy node to $\mathcal{G}$, the neighbor collection is extended with a dummy node $\varphi$, and the dummy node serves to aggregate the global graph information in turn.
Similarly, we feed $L_{\Phi}(\mathcal{G})$ as the input to learn the graph representation, where such neural networks actually model the edges in $\mathcal{G}$.

\subsection{Efficiency of Extensions}

\subsubsection{Learning with $\mathcal{G}_{\varphi}$}
\revisexin{
Even though we introduce one particular dummy node and $2n$ dummy edges to $\mathcal{G}$ (where $n$ is the vertex size of $\mathcal{G}$), kernels’ complexities when the size respect graphlets or tuples is not too large, denoted as $k$ in Sec.~\ref{sec:introduction}. GNNs yield additional computation of the $2n$ dummy edges, but it is still efficient because we do not involve additional operations upon existing $n$ vertices and $2n$ is usually much less than the existing $m$ edges.}

\subsubsection{Learning with $\mathcal{H}_{\Phi}$}
\revisexin{The overhead of utilizing $\mathcal{H}_{\Phi}$ include the cost to obtain and the cost to use. Sec.~\ref{sec:lossless_transform} analyzes the former part: $\mathcal{H}$ is constructed in $\sum_{v \in \mathcal{V}_{\mathcal{G}}}{d_v^- \cdot d_v^+} = \mathcal{O}(\tilde{d} \cdot m)$, and building $\mathcal{H}_{\Phi}$ also requires $\mathcal{O}((\tilde{d}+2) \cdot m) = \mathcal{O}(\tilde{d} \cdot m)$, where $\tilde{d}$ is a coefficient depending on graph structures, which is usually small and bounded by the maximum outdegree of $\mathcal{G}$.
After transforming $\mathcal{G}$ to $\mathcal{H}_{\Phi}$, kernels' complexities increase from $\mathcal{O}(k \cdot n^{k+1})$ to $\mathcal{O}(k \cdot m^{k+1})$, and $k$-layer GNNs' complexities increase from $\mathcal{O}(k \cdot m)$ to $\mathcal{O}(k \cdot \tilde{d} \cdot m)$.
Kernel complexities dramatically increase, but using $\mathcal{H}_{\Phi}$ looks more acceptable compared with increasing $k$; GNNs benefit from parallelization, so the running time also increases linearly. Overall, using $\mathcal{H}_{\Phi}$ as input is still efficient.
}

\begin{table*}[!ht]
    \scriptsize
    \centering
    \setlength\tabcolsep{1pt}
    \begin{tabular}{ll|ccc|ccc|ccc|ccc}
    \toprule
        \multicolumn{2}{c|}{\multirow{2}{*}{Models}}
        & \multicolumn{3}{c|}{PROTEINS} & \multicolumn{3}{c|}{D\&D} & \multicolumn{3}{c|}{NCI109} & \multicolumn{3}{c}{NCI1} \\
        & & $\mathcal{G}$ & $\mathcal{G}_{\varphi}$ & $\mathcal{H}_{\Phi}$ & $\mathcal{G}$ & $\mathcal{G}_{\varphi}$ & $\mathcal{H}_{\Phi}$ & $\mathcal{G}$ & $\mathcal{G}_{\varphi}$ & $\mathcal{H}_{\Phi}$ & $\mathcal{G}$ & $\mathcal{G}_{\varphi}$ & $\mathcal{H}_{\Phi}$ \\
        \midrule
        \multirow{8}{*}{Kernel} &
        \multirow{1}{*}{SP} & 
        73.48$\pm$\tiny{3.93} & \bf 74.20$\pm$\tiny{3.23} & 73.39$\pm$\tiny{3.04} & 
        80.50$\pm$\tiny{3.66} & 79.58$\pm$\tiny{3.91} & \bf 81.51$\pm$\tiny{3.91} & 
        73.65$\pm$\tiny{2.34} & 73.84$\pm$\tiny{2.07} & 74.11$\pm$\tiny{2.22} & 
        74.18$\pm$\tiny{1.67} & 74.70$\pm$\tiny{1.74} & 74.40$\pm$\tiny{1.74} \\
        & \multirow{1}{*}{GR} & 
        70.45$\pm$\tiny{6.54} & 74.20$\pm$\tiny{4.44} & 73.66$\pm$\tiny{4.00} & 
        78.82$\pm$\tiny{3.83} & 79.66$\pm$\tiny{5.18} & 78.82$\pm$\tiny{3.87} & 
        66.45$\pm$\tiny{2.14} & 72.46$\pm$\tiny{2.51} & 71.81$\pm$\tiny{2.69} & 
        65.16$\pm$\tiny{2.30} & 73.04$\pm$\tiny{1.81} & 71.07$\pm$\tiny{1.47} \\
        & \multirow{1}{*}{WLOA} & 
        72.59$\pm$\tiny{2.46} & 73.84$\pm$\tiny{3.29} & 74.02$\pm$\tiny{3.47} & 
        79.24$\pm$\tiny{3.61} & 79.24$\pm$\tiny{3.81} & 78.57$\pm$\tiny{3.59} & 
        85.43$\pm$\tiny{1.51} & 84.61$\pm$\tiny{1.52} & 84.81$\pm$\tiny{1.11} & 
        85.96$\pm$\tiny{1.82} & 86.33$\pm$\tiny{1.77} & \bf 86.37$\pm$\tiny{1.75} \\
        & \multirow{1}{*}{1-WL} & 
        71.79$\pm$\tiny{4.52} & 73.30$\pm$\tiny{4.14} & 73.48$\pm$\tiny{5.02} & 
        80.50$\pm$\tiny{4.43} & 81.26$\pm$\tiny{4.08} & 80.42$\pm$\tiny{3.85} & 
        \bf 85.54$\pm$\tiny{1.34} & 83.74$\pm$\tiny{0.94} & 84.37$\pm$\tiny{1.02} & 
        85.13$\pm$\tiny{1.69} & 84.87$\pm$\tiny{1.77} & 85.38$\pm$\tiny{1.21} \\
        \cline{2-14} & \multirow{1}{*}{2-WL} & 
        74.11$\pm$\tiny{5.19} & 75.27$\pm$\tiny{4.67} & OOM & 
        OOM & OOM & OOM & 
        68.09$\pm$\tiny{1.55} & 68.38$\pm$\tiny{1.21} & 72.24$\pm$\tiny{1.85} & 
        67.71$\pm$\tiny{1.33} & 67.49$\pm$\tiny{1.45} & 69.00$\pm$\tiny{2.34} \\
        & \multirow{1}{*}{$\delta$-2-WL} & 
        74.20$\pm$\tiny{4.98} & 74.82$\pm$\tiny{4.16} & OOM & 
        OOM & OOM & OOM & 
        68.00$\pm$\tiny{1.94} & 68.26$\pm$\tiny{1.59} & 70.34$\pm$\tiny{1.87} & 
        67.32$\pm$\tiny{1.34} & 67.37$\pm$\tiny{1.40} & 69.20$\pm$\tiny{2.18} \\
        & \multirow{1}{*}{$\delta$-2-LWL} & 
        73.66$\pm$\tiny{5.10} & 74.37$\pm$\tiny{3.34} & 74.11$\pm$\tiny{3.72} & 
        77.06$\pm$\tiny{5.99} & 77.31$\pm$\tiny{5.98} & 79.41$\pm$\tiny{5.28} & 
        84.20$\pm$\tiny{1.44} & 83.12$\pm$\tiny{1.34} & 83.82$\pm$\tiny{1.06} & 
        85.40$\pm$\tiny{1.28} & 84.06$\pm$\tiny{1.54} & 85.40$\pm$\tiny{1.51} \\
        & \multirow{1}{*}{$\delta$-2-LWL$^+$} & 
        78.12$\pm$\tiny{4.75} & 83.48$\pm$\tiny{4.34} & \bf 84.55$\pm$\tiny{3.62} & 
        77.14$\pm$\tiny{6.05} & 77.56$\pm$\tiny{6.30} & \bf 79.58$\pm$\tiny{6.24} & 
        88.79$\pm$\tiny{0.94} & \bf 89.42$\pm$\tiny{1.37} & 88.57$\pm$\tiny{0.97} & 
        91.92$\pm$\tiny{1.93} & \bf 93.67$\pm$\tiny{0.84} & 91.65$\pm$\tiny{1.96} \\
        \hline
        \hline
        \multirow{7}{*}{Network} &
        \multirow{1}{*}{GraphSAGE} & 
        73.48$\pm$\tiny{5.66} & 73.93$\pm$\tiny{5.68} & - & 
        77.73$\pm$\tiny{4.66} & 78.91$\pm$\tiny{4.59} & - &
        73.38$\pm$\tiny{2.68} & 74.13$\pm$\tiny{2.30} & - &
        73.82$\pm$\tiny{2.17} & 74.31$\pm$\tiny{2.27} & - \\
        & \multirow{1}{*}{GCN} & 
        72.95$\pm$\tiny{3.88} & 74.02$\pm$\tiny{3.82} & - & 
        72.77$\pm$\tiny{4.62} & \bf 80.76$\pm$\tiny{5.37} & - &
        50.34$\pm$\tiny{2.69} & 51.67$\pm$\tiny{5.52} & - &
        61.75$\pm$\tiny{11.1} & 68.95$\pm$\tiny{10.8} & - \\
        & \multirow{1}{*}{GIN} & 
        73.84$\pm$\tiny{4.46} & 74.11$\pm$\tiny{4.12} & - &
        76.97$\pm$\tiny{3.87} & 77.65$\pm$\tiny{3.46} & - & 
        72.61$\pm$\tiny{2.37} & 73.82$\pm$\tiny{2.50} & - &
        73.50$\pm$\tiny{1.80} & 75.16$\pm$\tiny{1.49} & - \\
        & \multirow{1}{*}{RGCN} & 
        73.30$\pm$\tiny{4.90} & 74.98$\pm$\tiny{4.50} & \bf 75.09$\pm$\tiny{4.03}& 
        69.16$\pm$\tiny{9.97} & 69.24$\pm$\tiny{10.0} & 78.47$\pm$\tiny{5.24}&
        50.29$\pm$\tiny{2.08} & 51.52$\pm$\tiny{4.37} & 71.71$\pm$\tiny{7.59}&
        52.75$\pm$\tiny{4.75} & 57.27$\pm$\tiny{9.49} & 74.04$\pm$\tiny{1.15}\\
        & \multirow{1}{*}{RGIN} & 
        68.75$\pm$\tiny{6.59} & 70.54$\pm$\tiny{5.03} & 74.20$\pm$\tiny{2.93}& 
        77.65$\pm$\tiny{4.62} & 78.15$\pm$\tiny{4.60} & 77.73$\pm$\tiny{4.42}&
        64.20$\pm$\tiny{2.85} & 64.52$\pm$\tiny{2.58} & \bf 75.43$\pm$\tiny{3.50}&
        66.11$\pm$\tiny{1.77} & 66.11$\pm$\tiny{1.69} & \bf 76.18$\pm$\tiny{2.03}\\
        \cline{2-14} & \multirow{1}{*}{DiffPool} & 
        75.62$\pm$\tiny{5.17} & \bf 75.98$\pm$\tiny{3.89} & - & 
        81.41$\pm$\tiny{5.11} & 80.25$\pm$\tiny{4.69} & - &
        75.29$\pm$\tiny{1.85} & \bf 75.44$\pm$\tiny{1.90} & - &
        76.62$\pm$\tiny{1.93} & \bf 77.08$\pm$\tiny{1.33} & - \\ 
        & \multirow{1}{*}{HGP-SL} & 
        71.25$\pm$\tiny{7.13} & 74.46$\pm$\tiny{3.77} & - & 
        74.62$\pm$\tiny{3.19} & \bf 82.07$\pm$\tiny{2.11} & - &
        74.78$\pm$\tiny{2.37} & 74.32$\pm$\tiny{1.84} & - & 
        74.94$\pm$\tiny{0.88} & 76.08$\pm$\tiny{1.94} & - \\
        \hline
        \hline
        \multicolumn{2}{c|}{\multirow{1}{*}{\it Average}} & 
        \it 73.13$\pm$\tiny{2.10} & \it 74.77$\pm$\tiny{2.60} & \it 75.31$\pm$\tiny{3.53} & 
        \it 77.20$\pm$\tiny{3.26} & \it 78.59$\pm$\tiny{3.05} & \it 79.31$\pm$\tiny{1.13} & 
        \it 72.07$\pm$\tiny{11.20} & \it 72.62$\pm$\tiny{10.53} & \it 77.72$\pm$\tiny{6.51} & 
        \it 73.48$\pm$\tiny{10.16} & \it 75.10$\pm$\tiny{8.98} & \it 78.27$\pm$\tiny{7.77} \\
        \bottomrule
    \end{tabular}
    \caption{Accuracies on graph classification, where ``OOM'' means out-of-memory, the best results are highlighted in bold, and the average results are italicized.}
    \label{table:graph_classification}
\end{table*}

\section{Experiment}

To evaluate the effectiveness of dummy nodes, we conduct two (sub)graph-level tasks.
On graph classification, we use both kernel functions and graph neural networks;
on subgraph isomorphism counting and matching, we only employ neural methods upon the generalization.
Appendix~\ref{appendix:exp} provides more details of experiments.

\subsection{Graph Classification}
\label{section:gc}

\noindent\textbf{Datasets.}
We select four benchmarking datasets where current state-of-the-art models face overfitting problems: PROTEINS~\cite{borgwardt2005protein}, D\&D~\cite{dobson2003distinguishing}, NCI109, and NCI1~\cite{wale2008comparison}.
The highest accuracies on these datasets are around $70\%\sim90\%$, and we want to explore the gain of dummy nodes.

\noindent\textbf{Graph Kernels.}
We use the open-source GKs for a fair comparison.
\citeauthor{morris2020weisfeiler}~(\citeyear{morris2020weisfeiler}) released a toolkit with an efficient C++ implementation of current best-performed GKs.\footnote{\url{https://www.github. com/chrsmrrs/sparsewl}}
Based on their experiments, we choose eight kernel functions: Shortest-path Kernel (SP)~\cite{borgward2005shortest}, Graphlet Kernel (GR)~\cite{shervashidze2009efficient}, Weisfeiler-Lehman Optimal Assignment Kernel (WLOA)~\cite{kriege2016on}, Weisfeiler-Lehman Subtree Kernels (1-WL and 2-WL)~\cite{shervashidze2011weisfeiler}, their proposed $\delta$-2-WL, $\delta$-2-LWL, and $\delta$-2-LWL$^+$.
The kernel functions for $\mathcal{G}_{\varphi}$ and $\mathcal{H}_{\Phi}$ are described in Sec.~\ref{section:extension_gk}.
Specifically, to handle $\mathcal{G}$, $\mathcal{G}_{\varphi}$ and $\mathcal{H}_{\Phi}$, GKs are equipped with their original kernel functions, Eq.~(\ref{eq:kernel_varphi}), and Eq.~(\ref{eq:kernel_Phi}), respectively.

\noindent\textbf{Graph Neural Networks.}
We adopt the PyG library~\cite{fey2019pyg}
to implement the neural baselines.
We consider the three most-famous networks, GraphSAGE~\cite{hamilton2017inductive}, GCN~\cite{kipf2017semi} and GIN~\cite{xu2019how}, as well as two state-of-the-art pooling-based networks, DiffPool~\cite{ying2018hierarchical} and HGP-SL~\cite{zhang2019hierarchical}.
Seeing that edge-to-vertex-transformed graphs ($\mathcal{H}_{\Phi}$) involve edge labels, we use two relational GNNs, RGCN~\cite{schlichtkrull2018modeling} and RGIN~\cite{liu2020neural}, to better utilize edge type information.
\revisexin{Note that we do not concatenate features from $\mathcal{G}$ for fairness, and we hope GNNs can learn from data.}

\begin{table*}[!ht]
    \small
    \centering
    \setlength\tabcolsep{4pt}
    \begin{tabular}{ll|ccc|ccc|ccc|ccc}
    \toprule
        \multicolumn{2}{c|}{\multirow{3}{*}{Models}} & \multicolumn{6}{c|}{Homogeneous}  & \multicolumn{6}{c}{Heterogeneous} \\
        \cline{3-14}
        & & \multicolumn{3}{c|}{Erd\H{o}s-Renyi} & \multicolumn{3}{c|}{Regular} & \multicolumn{3}{c|}{Complex} & \multicolumn{3}{c}{MUTAG} \\
        & & RMSE & MAE & GED & RMSE & MAE & GED & RMSE & MAE & GED & RMSE & MAE & GED \\
        \midrule
        \multirow{2}{*}{RGCN}
        & $\mathcal{G}$ & 9.386 & 5.829 & 28.963 & 14.789 & 9.772 & 70.746 & 28.601 & 9.386 & 64.122 & 0.777 & 0.334 & 1.441 \\
        & $\mathcal{G}_{\varphi}$ & 7.764 & 4.654 & 24.438 & 14.077 & 9.511 & 71.393 & 26.389 & 7.110 & 55.600 & 0.534 & 0.191 & 1.052 \\
        \hline
        \multirow{2}{*}{RGIN}
        & $\mathcal{G}$ & 6.063 & 3.712 & 22.155 & 13.554 & 8.580 & 56.353 & 20.893 & 4.411 & 56.263 & 0.273 & 0.082 & 0.329 \\
        & $\mathcal{G}_{\varphi}$ & \bf 4.769 & \bf 2.898 & \bf 15.219 & \bf 10.871 & \bf 6.874 & \bf 43.537 & 19.436 & \bf 3.846 & \bf 41.337 & \bf 0.193 & \bf 0.064 & 0.277 \\
        \hline
        \multirow{2}{*}{HGT}
        & $\mathcal{G}$ &24.376  & 14.630 & 104.000 & 26.713 & 17.482 & 191.674 & 34.055 & 8.336 & 70.080 & 1.317 & 0.526 & 3.644 \\
        & $\mathcal{G}_{\varphi}$ & 5.969 & 3.691 & 23.401 & 13.813 & 8.813 & 64.926 & 20.841 & 4.707 & 47.409 & 0.876 & 0.345 & 2.973 \\
        \hline
        \multirow{2}{*}{CompGCN}
        & $\mathcal{G}$ & 6.706 & 4.274 & 25.548 & 14.174 & 9.685 & 64.677 & 22.287 & 5.127 & 57.082 & 0.300 & 0.085 & 0.278 \\
        & $\mathcal{G}_{\varphi}$ & 4.981 & 3.019 & 16.263 & 11.450 & 7.443 & 46.802 & 20.786 & 4.048 & 56.269 & 0.321 & 0.089 & \bf 0.262 \\
        \hline
        \multirow{2}{*}{DMPNN}
        & $\mathcal{G}$ & 5.330 & 3.308 & 23.411 & 11.980 & 7.832 & 56.222 & 18.974 & 3.992 & 56.933 & 0.232 & 0.088 & 0.320 \\
        & $\mathcal{G}_{\varphi}$ & 5.220 & 3.130 & 23.285 & 11.259 & 7.136 & 49.179 & \bf 18.885 & 3.892 & 73.161 & 0.259 & 0.101 & 0.623 \\
        \hline
        \hline
        \multirow{2}{*}{Deep-LRP}
        & $\mathcal{G}$ & 0.794 & 0.436 & 2.571 & 1.373 & 0.788 & 5.432 & 27.490 & 5.850 & 56.772 & 0.260 & 0.094 & 0.437 \\
        & $\mathcal{G}_{\varphi}$ & 0.710 & 0.402 & 2.218 & 1.145 & 0.718 & 4.611 & 24.458 & 5.094 & 57.398 & 0.356 & 0.115 & 0.849 \\
        \hline
        \multirow{2}{*}{DMPNN-LRP}
        & $\mathcal{G}$ & \bf 0.475 & 0.287 & 1.538 & \bf 0.617 & 0.422 & 2.745 & 20.425 & 4.173 & \bf 32.200 & 0.196 & 0.062 & \bf 0.210 \\
        & $\mathcal{G}_{\varphi}$ & 0.477 & \bf 0.260 & \bf 1.457 & 0.633 & \bf 0.413 & \bf 2.538 & \bf 18.127 & \bf 4.112 & 39.594 & \bf 0.186 & \bf 0.057 & 0.265 \\
        \bottomrule
    \end{tabular}
    \caption{Performance on subgraph isomorphism counting and matching.}
    \label{table:subisocnt}
\end{table*}


\noindent\textbf{Results and Discussion.}
Experimental results on graph classification benchmarks are shown in Table~\ref{table:graph_classification}.
Note that some models (GCN, GIN, GraphSAGE, DiffPool, and HGP-SL) are not designed to handle edge types. For a fair comparison, we do not evaluate these models' performance on the transformed graph $\mathcal{H}_{\Phi}$. 
We observe consistent improvement in performance after adding dummy nodes ($\mathcal{G}_{\varphi}$) to the original graphs ($\mathcal{G}$) for most classifiers.
When the input is the graphs ($\mathcal{H}_{\Phi}$) transformed by $L_{\Phi}$, we also see the further improvement on average.
Any progress on graph kernels is not easy, but our kernel modification helps almost all kernels over four datasets.
And kernels with the 2-order structures nearly surpass the 1-order kernels and classical graph neural networks with the expressive power no more than 1-WL.
Obtaining global high-order structure statistics is time-consuming, but the local variants  $\delta$-2-LWL, and $\delta$-2-LWL$^+$ achieves the best tradeoff between efficiency and effectiveness.\footnote{
Even for the fastest $\delta$-3-LWL$^+$, it either requires over 20,000 seconds (for NCI109 and NCI1) or causes the out-of-memory issue (for PROTEINS and D\&D), but there is no further improvement.}
GNNs with pooling capture the hierarchical graph structures and outperform simple GNNs.
GNNs with $\mathcal{G}_{\varphi}$ can compete against DiffPool with $\mathcal{G}_{\varphi}$, but DiffPool and HGP-SL can further benefit from the artificial dummy nodes.
On the other hand, relational GNNs can handle various edge types in $\mathcal{H}_{\Phi}$.
Accuracies get boosted again when the input changes from $\mathcal{G}_{\varphi}$ to $\mathcal{H}_{\Phi}$.
\revisexin{
We also notice the unstable performance in GCN and RGCN on D\&D and NCI1.
Using $\mathcal{G}_{\varphi}$ instead of $\mathcal{G}$ cannot solve their oversmoothing problem, although their performance becomes slightly better to a certain degree.
As a comparison, GraphSAGE using $Max$ and GIN using $Sum$ can provide the steady criterion whenever the input graphs with dummy or not.
One negative observation is the over-parameterization problem in RGCN and RGIN.
They can perform as expected only when feeding $\mathcal{H}_{\Phi}$ with numerous edge types.
How to help relational GNNs handle dummy edges is one of our future work.
We also think the combination of pooling with heterogeneous information is promising.
}

\noindent\textbf{Relevance to Over-smoothing.}
\revisexin{Adding dummy nodes to graphs can help alleviate the over-smoothing issue.
For instance, on the NCI1 dataset, 
a 2-layer GIN
achieves 73.50±1.80 (75.16±1.49) accuracy on $\mathcal{G}$ ($\mathcal{G}_{\varphi}$);
a 4-layer model
achieves 71.97±1.46 (75.11±1.88) accuracy on $\mathcal{G}$ ($\mathcal{G}_{\varphi}$).
We can see that: (1) on $\mathcal{G}$, the model performs worse while its number of layers increases, indicating the over-smoothing does exist, and (2) 
on $\mathcal{G}_{\varphi}$,
the model achieves comparable performance after stacking layers, meaning adding dummy nodes help overcome the over-smoothing.
}

\revisexin{
To understand why, 
let's take a look at a 2-layer GNN.
The first layer helps every vertex receive one-hop information.
At this time, the dummy node aggregates all vertex information.
The second layer helps every vertex receive two-hop information (from its neighbors) and global information (from the dummy node).
The dummy provides every vertex $v$ with additional information of all vertices, helping $v$ differentiate the intersection of the one-hop and the two-hop, and other vertices beyond two hops.
In this way, with the help of the ``shortcut'' by the dummy, the learned vertex representations of are more distinct from each other.
This also applies for deeper layers, and we observe the models with dummy nodes are more robust against over-smoothing.
}

\subsection{Subgraph Isomorphism Counting and Matching}
\label{section:sic}

\noindent\textbf{Datasets.}
We evaluate neural methods on two synthetic homogeneous datasets with 4 graphlet patterns~\cite{chen2020can}, one synthetic heterogeneous dataset with 75 random patterns up to eight vertices, and one mutagenic compound dataset MUTAG with 24 artificial patterns~\cite{liu2020neural}.

\noindent\textbf{End-to-end Framework}
Considering the complexity and the generalization, we follow the neural framework proposed by \citeauthor{liu2020neural}~(\citeyear{liu2020neural}) and employ their released implementation.\footnote{\url{https://github.com/HKUST-KnowComp/DualMessagePassing}}
The end-to-end framework includes four parts: encoding, representation, fusion, and prediction.
It supports sequence models (e.g., RNN) and graph models (e.g., RGCN).
Based on their practical experience, we only consider the effective graph models, including RGCN, RGIN, CompGCN, DMPNN, Deep-LRP, and DMPNN-LRP~\cite{liu2021graph}.
\revisexin{We also implement HGT~\cite{hu2020heterogeneous} to see whether relation-specific attention performs well and whether dummy nodes can help it.}
\revisexin{
The framework utilizes one feed-forward network to make predictions at the graph level, i.e., $\text{FFN}_{\text{counting}}(Concat(\bm x_{v}, \bm p, \bm x_{v} - \bm p, \bm x_{\mathcal{G}_{v}} \odot \bm p))$, and another one feed-forward network to make predictions at the vertex level, i.e., $\text{FFN}_{\text{matching}}(Concat(\bm g, \bm p, \bm g - \bm p, \bm g \odot \bm p))$, where $\bm p$ is the pattern representation by applying pooling over the pattern's vertex representations, $\bm x_{v}$ is the graph vertex representation, and $\bm g$ is the sum of $\{\bm x_{v} | v \in \mathcal{V}_{\mathcal{G}}\}$.
}

\noindent\textbf{Evaluation Metric.}
The counting prediction is modeled as regression, so 
\revisexin{RMSE and MAE}
are adopted to estimate global inference skills.
\revisexin{
To evaluate the ability of local decision making, we require models to predict the frequency of each node that how many times it appears in all isomorphic subgraphs, 
\revisexin{so graph edit distance (GED) serves as a metric.}
}

\begin{figure}[t]
    \centering
    \includegraphics[width=\linewidth]{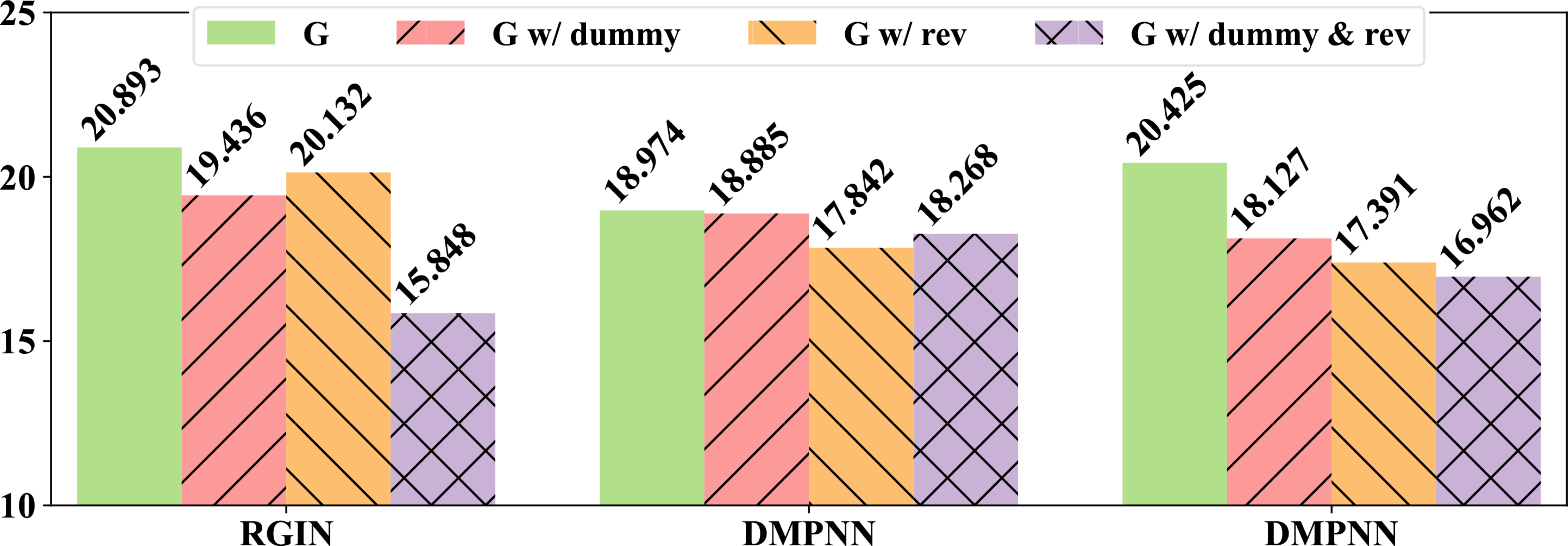}
    \vspace{-0.3in}
    \caption{RMSE of RGIN, DMPNN, and DMPNN-LRP with different input graph structures on the \textit{Complex} dataset.}
    \label{fig:complex}
    \vspace{-0.15in}
\end{figure}

\noindent\textbf{Results and Discussion.}
Table~\ref{table:subisocnt} lists the performance on subgraph isomorphism counting and matching.
Almost all graph models consistently benefit from the dummy nodes.
In particular, RGIN outperforms other GNNs.
Both RGCN and RGIN use relation-specific matrices to transform neighbor messages, but we still observe that RGIN has greater average relative error reductions ($19.35\%$ of RMSE and $24.06\%$ of GED) than RGCN ($15.28\%$ of RMSE and $13.75\%$ of GED).
One interesting observation is that HGT has the most prominent performance boost by a $49.02\%$ relative error reduction of RMSE, a $50.58\%$ relative error reduction of MAE, and a $49.59\%$ relative error reduction of GED on average.
We can explain that the dummy nodes provide an option to drop all pattern-irrelevant messages.
Without such dummy nodes and edges, irrelevant messages would always be aggregated as side-effects.
DMPNN, as the previous state-of-the-art model, has minor overall improvement and even a slight drop of GED on heterogeneous data.
\revisexin{The possible reason is the star topology, making the edge representation learning based on line graphs difficult.}
CompGCN fixes edge presentations and focuses on graph structures, yielding a $17.79\%$ reduction of GED.
On the other hand, DMPNN-LRP combines dual message passing with pooling on explicit neighbor permutations and obtains the lowest errors on homogeneous data, but it still faces the same problem that matching errors increase on heterogeneous graphs.

We also consider adding reversed edges and dummy nodes and edges simultaneously.
Figure~\ref{fig:complex} illustrates the counting error changes on the \textit{Complex} data.
We note that the two strategies work together well on RGIN, but the boost of DMPNN and DMPNN-LRP mainly comes from reversed edges.
We believe this issue can be addressed in the future since we see their cooperation in DMPNN-LRP and the success of graph classification with $\mathcal{H}_{\Phi}$ in Sec.~\ref{section:gc}.
\section{Expressive Power of MPNNs with Dummy Nodes and Transformers with CLS Tokens}
\revisexin{
In fact, the edge-to-vertex transform $L_{\Phi}$ corresponds to the construction of local 2-tuples in $\delta$-2-LWL$^+$~\cite{morris2020weisfeiler}.
And we can conclude that message passing neural networks (MPNNs) with $\mathcal{H}_{\Phi}$ have the same expressive power of $\delta$-2-LWL$^+$ with $\mathcal{G}$.
And it has been proven that $\delta$-$k$-LWL$^+$ is strictly more powerful than $k$-WL.
That is, MPNNs with $\mathcal{H}_{\Phi}$ are more powerful than $2$-WL with $\mathcal{G}$.
And we also know that MPNNs are no more powerful than $2$-WL~\cite{chen2020can} and usually as same powerful as 1-WL~\cite{xu2019how}, so we have successfully empowered the MPNNs to surpass 2-WL with the help of $\mathcal{H}_{\Phi}$.
The inverse transform $L^{-1}_{\Phi}$ can always recover $\mathcal{G}$ back if the vertex id information is provided.
This implies that MPNNs with $\mathcal{G}_{\varphi}$ and vertex id information should have the same expressive power as MPNNs with $\mathcal{H}_{\Phi}$.
}

\revisexin{
Graph attention networks (GATs)~\cite{velickovic2018graph} and heterogeneous graph transformers (HGTs)~\cite{hu2020heterogeneous} also belong to the message passing framework.
Therefore, the above discussions are applicable to them.
Transformer-based encoders~\cite{vaswani2017attention} regard the input as a fully-connected graph and adopt attention to explicitly learn pair-wise connections and implicitly learn global patterns.
For each vertex (token), all other vertices and itself are served as its neighbors, indicating a stronger discriminative capability than GATs and HGTs.
Thus, transformers with dummy nodes (usually named CLS tokens) should be more powerful than 2-WL.
Based on the analyses in~\cite{chen2020can}, a 12-layer transformer encoder can capture patterns size of at most $3 \cdot 2^{12}=12288$, and a 24-layer model can almost learn any patterns.
And that is the reason why transformers dominate the sequence encoding.
}
\section{Conclusion}
In this paper, we analyze the role of dummy nodes in the lossless edge-to-vertex transform.
We further prove that a dummy node with connections to all existing vertices can preserve the graph structure.
Specifically, we design an efficient monomorphic edge-to-vertex transform and find its inverse to recover the original graph back.
We extend graph kernels and graph neural networks with dummy nodes.
Experiments demonstrate the success of performance boost on graph classification and subgraph isomorphism counting and matching.
\revisexin{Last, we discuss the capability of MPNNs and Transformers with special dummy elements.}

\section*{Acknowledgements}
The authors of this paper were supported by the NSFC Fund (U20B2053) from the NSFC of China, the RIF (R6020-19 and R6021-20) and the GRF (16211520) from RGC of Hong Kong, the MHKJFS (MHP/001/19) from ITC of Hong Kong and the National Key R\&D Program of China (2019YFE0198200) with special thanks to HKMAAC and CUSBLT, and  the Jiangsu Province Science and Technology Collaboration Fund (BZ2021065). We also thank the support from the UGC Research Matching Grants (RMGS20EG01-D, RMGS20CR11, RMGS20CR12, RMGS20EG19, RMGS20EG21).
\clearpage

\bibliography{icml2022}
\bibliographystyle{icml2022}

\clearpage

\appendix

\section{Algorithms for $L_{\Phi}$ and $L_{\Phi}^{-1}$}
\label{appendix:alg}

\begin{algorithm}[!h]
    \caption{Edge-to-vertex transform $L_{\Phi}$}
    \label{alg:evt}
    {\footnotesize
        \begin{algorithmic}[1]
        \INPUT a connected directed graph $\mathcal{G}$ with $n$ vertices and $m$ edges ($m > 0$), a special vertex label for dummy nodes $x_{\varphi}$
        \STATE let $y_{\varphi}$ as the special edge label for dummy edges
        \STATE add one dummy node $\varphi$ with label $x_{\varphi}$
        \FOR{each $v$ in $\mathcal{V}_{\mathcal{G}} - \{\varphi\}$}
            \STATE add two dummy edges $(v, \varphi)$ and $(\varphi, v)$ with label $y_{\varphi}$
        \ENDFOR \hfill  // \COMMENT{end of step \textcircled{\raisebox{-.9pt} {1}}}
        \STATE get the line graph $\mathcal{H}_{\Phi} = L(\mathcal{G})$
        \STATE assign edge ids with original vertex ids \hfill // \COMMENT{end of step \textcircled{\raisebox{-.9pt} {2}}}
        \STATE add a single dummy node $\Phi$
        \FOR{each $e=(u, v)$ in $\mathcal{E}_{\mathcal{H}_{\Phi}}$}
            \IF{$u$ is associated with label $y_{\varphi}$ \AND $v$ is associated with label $y_{\varphi}$}
            \STATE delete $e$
            \ELSIF{$u$ is associated with label $y_{\varphi}$}
            \STATE add one edge $(\Phi, v)$ with $e$'s id and labels $\mathcal{Y}_{\mathcal{H}_{\Phi}}(e)$
            \STATE delete $e$
            \ELSIF{$v$ is associated with label $y_{\varphi}$}
            \STATE add one edge $(u, \Phi)$ with $e$'s id and labels $\mathcal{Y}_{\mathcal{H}_{\Phi}}(e)$
            \STATE delete $e$
            \ENDIF
        \ENDFOR
        \FOR{each $v$ in $\mathcal{V}_{\mathcal{H}_{\Phi}}$}
            \IF{$v$ is associated with label $y_{\varphi}$}
                \STATE delete $v$
            \ENDIF
        \ENDFOR \hfill // \COMMENT{end of step \textcircled{\raisebox{-.9pt} {3}}}
        \OUTPUT the transformed graph $\mathcal{H}_{\Phi}$
        \end{algorithmic}
    }
\end{algorithm}

\begin{algorithm}[h!]
    \caption{Inverse edge-to-vertex transform $L_{\Phi}^{-1}$}
    \label{alg:ievt}
    {\footnotesize
        \begin{algorithmic}[1]
        \INPUT a transformed graph $\mathcal{H}_{\Phi}$ obtained by $L_{\Phi}$, a special vertex label for dummy nodes $x_{\varphi}$
        \STATE get the line graph $\mathcal{G}=L(\mathcal{H}_{\Phi})$
        \STATE assign vertex ids with the original edge ids
        \hfill // \COMMENT{end of step \textcircled{\raisebox{-.9pt} {1}}}
        \STATE create an empty mapping $\mathcal{I}$
        \FOR{each $e=(u, v)$ in $\mathcal{E}_{\mathcal{G}}$}
            \IF{$u.id$ not in $\mathcal{I}$ \AND $v.id$ not in $\mathcal{I}$}
            \STATE set $\mathcal{I}(u.id) \leftarrow u$ and $\mathcal{I}(v.id) \leftarrow v$
            \ELSIF{$u.id$ not in $\mathcal{I}$}
            \STATE add one edge $(u, \mathcal{I}(v.id))$ with $e$'s id and  labels
            \STATE delete $e$
            \STATE set $\mathcal{I}(u.id) \leftarrow u$
            \ELSIF{$v.id$ not in $\mathcal{I}$}
            \STATE add one edge $(\mathcal{I}(u.id), v)$ with $e$'s id and labels
            \STATE delete $e$
            \STATE set $\mathcal{I}(v.id) \leftarrow v$
            \ELSIF{$u \neq \mathcal{I}(u.id)$ \OR $v \neq \mathcal{I}(v.id)$}
            \STATE add one edge $(\mathcal{I}(u.id), \mathcal{I}(v.id))$ with $e$'s id and labels
            \STATE delete $e$
            \ENDIF
        \ENDFOR
        \FOR{each $v$ in $\mathcal{V}_{\mathcal{G}}$}
            \IF{$v \neq \mathcal{I}(v.id)$}
                \STATE delete $v$
            \ENDIF
        \ENDFOR \hfill // \COMMENT{end of step \textcircled{\raisebox{-.9pt} {2}}}
        \FOR{each $e$ in $\mathcal{E}_{\mathcal{G}}$}
            \IF{$e$ is associated with label $x_{\varphi}$}
                \STATE delete $e$
            \ENDIF
        \ENDFOR \hfill // \COMMENT{end of step \textcircled{\raisebox{-.9pt} {3}}}
        \OUTPUT the transformed graph $\mathcal{G}$
        \end{algorithmic}
    }
\end{algorithm}

\section{Details of Experiments}
\label{appendix:exp}
\subsection{Environment}
We conduct our experiments on one CentOS 7 server with 2 Intel Xeon Gold 5215 CPUs and 4 NVIDIA GeForce RTX 3090 GPUs.
The software versions are: GNU C++ Compiler 5.2.0, Python 3.7.3, PyTorch 1.7.1, torch-geometric 2.0.2, and DGL 0.6.0.


\subsection{Graph Classification}
\label{appendix:exp_gc}

\subsubsection{Datasets}
Dataset statistics are listed in Table~\ref{table:stat_gc}. 
To construct the dataset with dummy node information, we add an extra dummy node to each graph and connect it with all the other vertices in the  graph with bidirectional edges. 
\revisexin{When one graph is undirected, we employ the common practice to replace one undirected edge with one directed edge and its reverse.}
Following previous work~\cite{zhang2019hierarchical}, we randomly split each dataset into the training set (80\%), the validation set (10\%), and the test set (10\%) in each run.

\subsubsection{Implementation Details}
\begin{itemize}
\item {\noindent\textbf{Kernel Methods.}
We compile kernel functions with C++11 features and -O2 flag.
After obtaining normalized Gram matrices, SVM classifiers are trained based on LibSVM\footnote{\url{https://www.csie.ntu.edu.tw/~cjlin/libsvm}} and wrapped by sklearn.
}
\item {\noindent\textbf{Graph Neural Network Based Methods.}
We implement and evaluate all our graph neural network based models with the PyG library. 
For GraphSAGE, GIN and DiffPool, we adapt the implementation by \citeauthor{errica2020afair}~(\citeyear{errica2020afair}).\footnote{\url{https://github.com/diningphil/gnn-comparison}}
To implement our RGCN, we modify the PyG implementation\footnote{\url{https://github.com/pyg-team/pytorch_geometric/blob/master/examples/rgcn.py}} by adding 3 fully connected layers after the convolutional layers.
The RGIN convolutional layer can be adapted from a RGCN convolutional layer, by setting the aggregation function as \textit{Sum}, and followed by a multi-layer perceptron.
For HGP-SL, we use the official code.\footnote{\url{https://github.com/cszhangzhen/HGP-SL}}
We observe a performance drop compared with the results reported in their paper, either when it runs with a newer version of torch-sparse (in our setting, torch-sparse=0.6.9), or with the version reported in their GitHub page (torch-sparse=0.4.0). 
For a fair comparison with other baseline models, we choose to use our current software versions.
}
\end{itemize}

\subsubsection{Hyper-parameter Settings}
For reproducibility, we run all the experiments $10$ times with random seeds $\{2020, 2021, \cdots, 2029\}$, and report the sample mean and standard deviation of test accuracies. 

\begin{itemize}
\item {\noindent\textbf{Kernel Methods.} 
For kernel methods, we search for the regularization parameter $C$ within $\{10^{-7}, 10^{-6}, \cdots, 10^{3}\}$ for each seed and corresponding training data.
The best hyper-parameter setting is chosen by the best validation performance.
}
\item {\noindent\textbf{Graph Neural Network Based Methods.}
We use the Adam optimizer~\cite{kingma2015adam} to optimize the models. 
Following \citeauthor{zhang2019hierarchical}~(\citeyear{zhang2019hierarchical}), an early stopping strategy with patience 100 is adopted during training, i.e., training would be stopped when the loss on validation set does not decrease for over 100 epochs. 
For GraphSAGE, GIN and DiffPool, the optimal hyper-parameters are found using grid search within the same search ranges in \cite{errica2020afair}. 
For HGP-SL, we follow the official hyper-parameters reported in their GitHub repository.
\revisejy{For models using graph convolutional operator \cite{kipf2017semi} (GCN, DiffPool, HGP-SL), we additionally impose a learnable weight $\gamma$ on the dummy edges.
The weights for all the other edges is set as 1, and $\gamma$ is initialized with different values in $\{0.01, 0.1, 1, 10\}$.}
For relational models RGCN and RGIN, we search for the learning rate within $\{1e-2, 1e-3, 1e-4\}$, batch size $\in \{128, 512\}$, hidden dimension $\in \{32, 64\}$, dropout ratio $\in \{0, 0.5\}$, and number of layers $\in \{2, 4\}$.
}
\end{itemize}

\begin{table}[!t]
    \footnotesize
    \centering
    \setlength\tabcolsep{4pt}
    \resizebox{1.\linewidth}{!}{%
    \begin{tabular}{cl|c|c|c|c|c|c}
    \toprule
        \multicolumn{2}{c|}{Dataset} & \# Graphs & \# Classes & Avg. $|\mathcal{V}_{\mathcal{G}}|$ & Avg. $|\mathcal{E}_{\mathcal{G}}|$ & $|\mathcal{X}_{\mathcal{G}}|$ & $|\mathcal{Y}_\mathcal{G}|$\\
        \toprule
        \multirow{3}{*}{PROTEINS}
        & $\mathcal{G}$             & 1,113 & 2 & 39.1 & 145.6 & 3 & 1\\
        & $\mathcal{G}_{\varphi}$   & 1,113 & 2 & 40.1 & 223.7 & 4 & 2\\
        & $\mathcal{H}_{\Phi}$      & 1,113 & 2 & 146.6 & 885.9 & 2 & 4\\
        \midrule
        \multirow{3}{*}{D\&D}
        & $\mathcal{G}$             & 1,178 & 2 & 284.3 & 1431.3 & 82 & 1\\
        & $\mathcal{G}_{\varphi}$   & 1,178 & 2 & 285.3 & 2000.0 & 83 & 2\\
        & $\mathcal{H}_{\Phi}$      & 1,178 & 2 & 1432.3 & 10875.5 & 2 & 83\\
        \midrule
        \multirow{3}{*}{NCI109}
        & $\mathcal{G}$             & 4,127 & 2 & 29.7 & 64.3 & 38 & 1\\
        & $\mathcal{G}_{\varphi}$   & 4,127 & 2 & 30.7 & 123.6 & 39 & 2\\
        & $\mathcal{H}_{\Phi}$      & 4,127 & 2 & 65.3 & 285.8 & 2 & 39\\
        \midrule
        \multirow{3}{*}{NCI1}
        & $\mathcal{G}$             & 4,110 & 2 & 29.9 & 64.6 & 37 & 1\\
        & $\mathcal{G}_{\varphi}$   & 4,110 & 2 & 30.9 & 124.3 & 38 & 2\\
        & $\mathcal{H}_{\Phi}$      & 4,110 & 2 & 65.6 & 287.0 & 2 & 38\\
        \bottomrule
    \end{tabular}
    }
    \caption{Dataset statistics on graph classification.}
    \label{table:stat_gc}
\end{table}

\subsection{Subgraph Isomorphism Counting and Matching}
\label{appendix:exp_sic}

\begin{table}[!t]
    \footnotesize
    \centering
    \setlength\tabcolsep{1.2pt}
    \resizebox{\linewidth}{!}{%
    \begin{tabular}{l|ccc|ccc|ccc|ccc}
    \toprule
        & \multicolumn{3}{c|}{Erd\H{o}s-Renyi} & \multicolumn{3}{c|}{Regular} & \multicolumn{3}{c|}{Complex} & \multicolumn{3}{c}{MUTAG} \\
        \toprule
        \# Train & \multicolumn{3}{c|}{6,000} & \multicolumn{3}{c|}{6,000} & \multicolumn{3}{c|}{358,512} & \multicolumn{3}{c}{1,488} \\
        \# Valid & \multicolumn{3}{c|}{4,000} & \multicolumn{3}{c|}{4,000} & \multicolumn{3}{c|}{44,814} & \multicolumn{3}{c}{1,512} \\
        \# Test & \multicolumn{3}{c|}{10,000} & \multicolumn{3}{c|}{10,000} & \multicolumn{3}{c|}{44,814} & \multicolumn{3}{c}{1,512} \\
        \midrule
        & Max & \multicolumn{2}{c|}{Avg.} & Max & \multicolumn{2}{c|}{Avg.} & Max & \multicolumn{2}{c|}{Avg.} & Max & \multicolumn{2}{c}{Avg.} \\
        $|\mathcal{V}_{\mathcal{P}}|$ & 4 & \multicolumn{2}{c|}{3.8$\pm$0.4} & 4 & \multicolumn{2}{c|}{3.8$\pm$0.4} & 8 & \multicolumn{2}{c|}{5.2$\pm$2.1} & 4 & \multicolumn{2}{c}{3.5$\pm$0.5} \\
        $|\mathcal{E}_{\mathcal{P}}|$ & 10 & \multicolumn{2}{c|}{7.5$\pm$1.7} & 10 & \multicolumn{2}{c|}{7.5$\pm$1.7} & 8 & \multicolumn{2}{c|}{5.9$\pm$2.0} & 3 & \multicolumn{2}{c}{2.5$\pm$0.5} \\
        $|\mathcal{X}_{\mathcal{P}}|$ & 1 & \multicolumn{2}{c|}{1$\pm$0} & 1 & \multicolumn{2}{c|}{1$\pm$0} & 8 & \multicolumn{2}{c|}{3.4$\pm$1.9} & 2 & \multicolumn{2}{c}{1.5$\pm$0.5} \\
        $|\mathcal{Y}_{\mathcal{P}}|$ & 1 & \multicolumn{2}{c|}{1$\pm$0} & 1 & \multicolumn{2}{c|}{1$\pm$0} & 8 & \multicolumn{2}{c|}{3.8$\pm$2.0} & 2 & \multicolumn{2}{c}{1.5$\pm$0.5} \\
        $|\mathcal{V}_{\mathcal{G}}|$ & 10 & \multicolumn{2}{c|}{10$\pm$0} & 30 & \multicolumn{2}{c|}{18.8$\pm$7.4} & 64 & \multicolumn{2}{c|}{32.6$\pm$21.2} & 28 & \multicolumn{2}{c}{17.9$\pm$4.6} \\
        $|\mathcal{E}_{\mathcal{G}}|$ & 48 & \multicolumn{2}{c|}{27.0$\pm$6.1} & 90 & \multicolumn{2}{c|}{62.7$\pm$17.9} & 256 & \multicolumn{2}{c|}{73.6$\pm$66.8} & 66 & \multicolumn{2}{c}{39.6$\pm$11.4} \\
        $|\mathcal{X}_{\mathcal{G}}|$ & 1 & \multicolumn{2}{c|}{1$\pm$0} & 1 & \multicolumn{2}{c|}{1$\pm$0} & 16 & \multicolumn{2}{c|}{9.0$\pm$4.8} & 7 & \multicolumn{2}{c}{3.3$\pm$0.8} \\
        $|\mathcal{Y}_{\mathcal{G}}|$ & 1 & \multicolumn{2}{c|}{1$\pm$0} & 1 & \multicolumn{2}{c|}{1$\pm$0} & 16 & \multicolumn{2}{c|}{9.4$\pm$4.7} & 4 & \multicolumn{2}{c}{3.0$\pm$0.1} \\
        \bottomrule
    \end{tabular}
    }
    \caption{Dataset statistics on subgraph isomorphism experiments. $\mathcal{P}$ and $\mathcal{G}$ corresponds to patterns and graphs.}
    \label{table:stat_sic}
\end{table}



\subsubsection{Datasets}
The statistics of datasets on the subgraph isomorphism counting and matching task are listed in Table~\ref{table:stat_sic}.

\subsubsection{Implementation Details}
We adapt the DGL implementation provided by \citeauthor{liu2021graph}~(\citeyear{liu2021graph})\footnote{\url{https://github.com/HKUST-KnowComp/DualMessagePassing}} to evaluate the effect of dummy nodes on neural subgraph isomorphism counting and matching.
Apart from that, we add a new implementation of HGT~\cite{hu2020heterogeneous}.
\revisexin{
We learn from experience and lessons to jointly train models for counting and matching in a multitask setting.}

\subsubsection{Hyper-parameter Settings}
We follow the same paradigm for the training and evaluation in  the multitask learning setting. 
The best model over validation data among three random seeds $\{0, 2020, 2022\}$ is reported.

The embedding dimensions and hidden sizes are set as 64 for all 3-layer networks.
Deep-LRP and DMPNN-LRP enumerate neighbor subsets by 3-truncated BFS.
The HGT model is also set as the same hyper-parameters.
Residual connections and Leaky ReLU are added between two layers.
We use the AdamW optimizer~\cite{loshchilov2019decoupled} to optimize the models with a learning rate $1e-3$ and a weight decay $1e-5$.

\end{document}